\declaretheorem[name=Definition]{definition}
\newcommand \TODO [1] {}
\newcommand{\E}{\mathbb E}
\newcommand{\KL}{D_\text{KL}}
\newcommand \mc \mathcal
\newcommand {\augm} [1] { \pmb{{#1}}}  % ^{\bm :}
\newcommand {\x} {\augm x}
\newcommand {\X} {\augm X}
\newcommand{\itup}[1]{{\setlength{\fboxsep}{-1.8pt}\colorbox{gray!17}{\strut $\displaystyle \ #1 \ $}}}
\newcommand{\dom}[1]{{\mathrlap{ #1 } \ }}
\newcommand {\hide} [1] {}
\title{Real-Time Reinforcement Learning}
\author{%
 Simon Ramstedt \\
  Mila, Element AI, \\
  Université de Montréal \\
  \texttt{simonramstedt@gmail.com} \\
   \And
  Christopher Pal \\
  Mila, Element AI, \\
  Polytechnique Montréal \\
  \texttt{christopher.pal@polymtl.ca} \\
  % examples of more authors
  % \And
  % Coauthor \\
  % Affiliation \\
  % Address \\
  % \texttt{email} \\
  % \AND
  % Coauthor \\
  % Affiliation \\
  % Address \\
  % \texttt{email} \\
  % \And
  % Coauthor \\
  % Affiliation \\
  % Address \\
  % \texttt{email} \\
  % \And
  % Coauthor \\
  % Affiliation \\
  % Address \\
  % \texttt{email} \\
}
\begin{document}

\maketitle

\begin{abstract}
Markov Decision Processes (MDPs), the mathematical framework underlying most algorithms in Reinforcement Learning (RL), are often used in a way that wrongfully assumes that the state of an agent's environment does not change during action selection. As RL systems based on MDPs begin to find application in real-world, safety-critical situations, this mismatch between the assumptions underlying classical MDPs and the reality of real-time computation may lead to undesirable outcomes. 
In this paper, we introduce a new framework, in which states and actions evolve simultaneously and show how it is related to the classical MDP formulation. We analyze existing algorithms under the new real-time formulation and show why they are suboptimal when used in real time. We then use those insights to create a new algorithm Real-Time Actor-Critic (RTAC) that outperforms the existing state-of-the-art continuous control algorithm Soft Actor-Critic both in real-time and non-real-time settings.
Code and videos can be found at \href{https://github.com/rmst/rtrl}{\texttt{github.com/rmst/rtrl}}.
\end{abstract}

Reinforcement Learning, has led to great successes in games \citep{tesauro1994td, mnih2015human, silver2017mastering} and is starting to be applied successfully to real-world robotic control \citep{schulman2015trust, hwangbo2019learning}.

\begin{wrapfigure}{r}{0.26\textwidth}

  \begin{center}
    \includegraphics[width=0.25 \textwidth, trim=0 30 0 80]{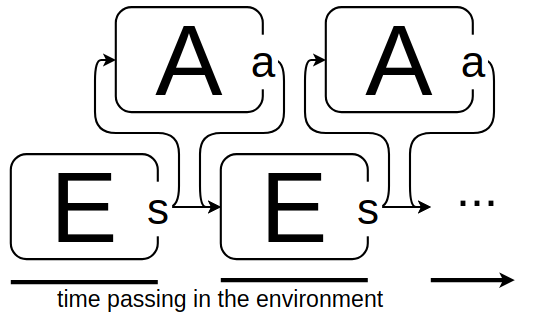}  % trim=left bottom right top
  \end{center}
 \caption{Turn-based interaction}
 \label{fig:TBDP}
\vspace{0.6cm}

\begin{center}
    \includegraphics[width=0.23 \textwidth, trim=0 50 0 0]{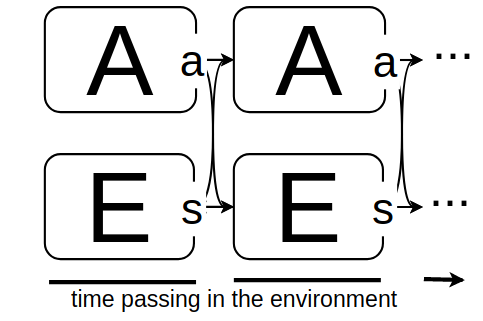}
 \end{center}
 \caption{Real-time interaction}
 \label{fig:RTDP}
 \vspace{-0.0cm}
% \end{figure}  
\end{wrapfigure}

The theoretical underpinning for most methods in Reinforcement Learning is the Markov Decision Process (MDP) framework \citep{bellman1957markovian}.
While it is well suited to describe turn-based decision problems such as board games, this framework is ill suited for real-time applications in which the environment's state continues to evolve while the agent selects an action \citep{travnik}. Nevertheless, this framework has been used for real-time problems using what are essentially tricks, e.g. pausing a simulated environment during action selection or ensuring that the time required for action selection is negligible \citep{hwangbo2017control}.
    
Instead of relying on such tricks, we propose an augmented decision-making framework - Real-Time Reinforcement Learning (RTRL) - in which the agent is allowed exactly one timestep to select an action. RTRL is conceptually simple and opens up new algorithmic possibilities because of its special structure.

We leverage RTRL to create Real-Time Actor-Critic (RTAC), a new actor-critic algorithm, better suited for real-time interaction, that is based on Soft Actor-Critic \citep{haarnoja2018soft}.
We then show experimentally that RTAC outperforms SAC in both real-time and non-real-time settings.

\section{Background}
In Reinforcement Learning the world is split up into agent and environment. The agent is represented by a policy -- a state-conditioned action distribution, while the environment is represented by a Markov Decision Process (Def.~\ref{def:MDP}). Traditionally, the agent-environment interaction has been governed by the MDP framework. Here, however, we strictly use MDPs to represent the environment. The agent-environment interaction is instead described by different types of Markov Reward Processes (MRP), with the $ T\!B\!M\!R\!P $ (Def.~\ref{def:TBMRP}) behaving like the traditional interaction scheme.

\begin{definition}
\label{def:MDP}
A Markov Decision Process (MDP) is characterized by a tuple with

(1) state space $S$, \hspace{0.15cm} (2) action space $A$, \hspace{0.15cm}
(3) initial state distribution $\mu: S \to \mathbb R$, \\
(4) transition distribution {$p: S \times S \times A \to \mathbb R$}, \hspace{0.15cm}
(5) reward function $r: S \times A \to \mathbb R$.
\end{definition}

An agent-environment system can be condensed into a Markov Reward Process $(S, \mu, \kappa, \bar r)$ consisting of a Markov process $(S, \mu, \kappa)$ and a state-reward function $\bar r$. The Markov process induces a sequence of states $(s_t)_{t \in \mathbb N}$ and, together with $\bar r$, a sequence of rewards {$(r_t)_{t \in \mathbb N} = (\bar r(s_t))_{t \in \mathbb N}$}.

As usual, the objective is to find a policy that maximizes the expected sum of rewards. In practice, rewards can be discounted and augmented to guarantee convergence, reduce variance and encourage exploration. However, when evaluating the performance of an agent, we will always use the undiscounted sum of rewards.

\subsection{Turn-Based Reinforcement Learning}

\begin{wrapfigure}{r}{0.13\textwidth}
  \begin{center}
    \includegraphics[width=0.06 \textwidth, trim=-10 20 10 60]{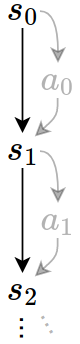}
  \end{center}
 \caption{$ T\!B\!M\!R\!P $}
 \label{fig:TBMRP}
\end{wrapfigure}

Usually considered part of the standard Reinforcement Learning framework is the turn-based scheme in which agent and environment interact. We call this interaction scheme Turn-Based Markov Reward Process.

\begin{definition} \label{def:TBMRP}
% A Turn-Based Markov Reward Process combines a MDP $E$ and a policy $\pi$ to form a MRP $(S, \mu, \kappa, \bar r) = T\!B\!M\!R\!P(E, \pi)$ with
A Turn-Based Markov Reward Process $(S, \mu, \kappa, \bar r) = T\!B\!M\!R\!P(E, \pi)$ combines a Markov Decision Process $E = (S, A, \mu, p, r)$ with a policy $\pi$, such that
\begin{equation} \label{mdp_kernel}
\medmuskip=0mu
\thinmuskip=0mu
\thickmuskip=0mu
%  \kappa^\pi_E: S \to \mathcal P(S) \quad \text{and} \quad
\kappa(s_{t+1} | s_t) = \int_A p(s_{t+1} | s_t, a) \pi(a | s_t) \ d a
\quad \text{and} \quad \bar r(s_t) = \int_A r(s_t, a) \pi(a | s_t) \ d a.
\
% \footnote{We write $\pi(a | s)$ instead of $\pi(s)(a)$ throughout the paper.} 
\end{equation}
% The state space and initial distribution of the Markov process are the same as in $E$.
\end{definition}

We say the interaction is turn-based, because the environment pauses while the agent selects an action and the agent pauses until it receives a new observation from the environment. This is illustrated in Figure~\ref{fig:TBDP}.
An action selected in a certain state is paired up again with that same state to induce the next. The state does not change during the action selection process.

\section{Real-Time Reinforcement Learning}

\begin{wrapfigure}{r}{0.13\textwidth}
  \begin{center}
    \includegraphics[width=0.07 \textwidth, trim=-5 1.1cm 5 2.4cm]{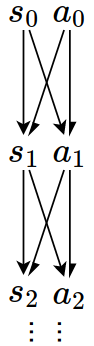}  % trim=left bottom right top
  \end{center}
  \caption{$ R\!T\!M\!R\!P $}
  \label{fig:RTMRP}
  \vspace{-0.5cm}
\end{wrapfigure}

In contrast to the conventional, turn-based interaction scheme, we propose an alternative, real-time interaction framework in which states and actions evolve simultaneously.
Here, agent and environment step in unison to produce new state-action pairs $\x_{t+1} = (\itup{s_{t+1}, a_{t+1}})$
% \footnote{Real-time states $(\itup{s_t, a_t}) \in \augm X$ have a grey background color.}
from old state-action pairs $\x_t = (\itup{s_t, a_t})$ as illustrated in Figures~\ref{fig:RTDP}~and~\ref{fig:RTMRP}.
\begin{definition} \label{def:RTMRP}
A Real-Time Markov Reward Process $(\X, \augm \mu, \augm \kappa, \bar{\augm r}) = R\!T\!M\!R\!P(E, \augm \pi)$  combines a Markov Decision Process $E = (S, A, \mu, p, r)$ with a policy $\pi$, such that
\begin{equation}
\medmuskip=0mu
\thinmuskip=0mu
\thickmuskip=0mu
% \augm \kappa(\x_{t+1} | \x_t) = \augm \kappa((s_{t+1}, a_{t+1}) | \x_t) = p(s_{t+1} | \x_t) \ \augm \pi(a_{t+1} | \x_t) 
\augm \kappa(\itup{s_{t+1}, a_{t+1}} | \itup{s_t, a_t})
= p(s_{t+1} | s_t, a_t) \ \augm \pi(a_{t+1} | s_t, a_t) 
% \quad \text{and} \quad \bar{\augm r}(\x) = r(s, u).
\quad \text{and} \quad \bar{\augm r}(\itup{s_t, a_t}) = r(s_t, a_t).
\end{equation}
The system state space is $\X = S \times A$. The initial action $a_0$ can be set to some fixed value, i.e. $\augm \mu(\itup{s_0, a_0}) = \mu(s_0) \ \delta(a_0-c)$.%
\footnote{$\delta$ is the Dirac delta distribution. If $y \sim \delta(\cdot - x)$ then $y=x$ with probability one.}
\end{definition}

Note that we introduced a new policy $\augm \pi$ that takes state-action pairs instead of just states. That is because the system state $\augm x = \itup{(s, a)}$ is now a state-action pair and $s$ alone is not a sufficient statistic of the future of the stochastic process anymore.

\subsection{The real-time framework is made for back-to-back action selection}

\vspace{-0.2cm}

In the real-time framework, the agent has exactly one timestep to select an action. If an agent takes longer that its policy would have to be broken up into stages that take less than one timestep to evaluate. On the other hand, if an agent takes less than one timestep to select an action, the real-time framework will delay applying the action until the next observation is made. The optimal case is when an agent, immediately upon finishing selecting an action, observes the next state and starts computing the next action. This continuous, back-to-back action selection is ideal in that it allows the agent to update its actions the quickest and no delay is introduced through the real-time framework.
% e.g. if we can compute an action in 1ms we should do so a 1000 times per second.

To achieve back-to-back action selection, it might be necessary to match timestep size to the policy evaluation time. With current algorithms, reducing timestep size might lead to worse performance. Recently, however, progress has been made towards timestep agnostic methods \citep{tallec2019making}. We believe back-to-back action selection is an achievable goal and we demonstrate here that the real-time framework is effective even if we are not able to tune timestep size (Section~\ref{SectionExperiments}).

% A different perspective on RTRL is to view $\pi$ as the update equation of a discretization of a differential equation for $u$ (see Appendix \TODO{...}).

\vspace{-0.2cm}

\subsection{Real-time interaction can be expressed within the turn-based framework}
It is possible to express real-time interaction within the standard, turn-based framework, which allows us to reconnect the real-time framework to the vast body of work in RL. Specifically, we are trying to find an augmented environment $R\!T\!M\!D\!P(E)$ that behaves the same with turn-based interaction as would $E$ with real-time interaction.

In the real-time framework the agent communicates its action to the environment via the state. However, in the traditional, turn-based framework, only the environment can directly influence the state. We therefore need to deterministically "pass through" the action to the next state by augmenting the transition function.
The $R\!T\!M\!D\!P$ has two types of actions, (1) the actions $\augm a_t$ emitted by the policy and (2) the action component $a_t$ of the state $\augm x_t = (\itup{s_t, a_t})$, where $a_t = \augm a_{t-1}$ with probability one.

\begin{definition} \label{def:RTMDP}
A Real-Time Markov Decision Process $(\X, A, \augm \mu, \augm p, \augm r) = R\!T\!M\!D\!P(E)$ augments another Markov Decision Process $E = (S, A, \mu, p, r)$, such that

(1) state space $\X = S \times A$, \hspace{0.1cm} (2) action space is $A$, \\
(3) initial state distribution $
\augm \mu(\x_0) = 
\augm \mu(\itup{s_0, a_0}) = \mu(s_0) \ \delta(a_0 - c)$, \\  % usually this is called \mu
(4) transition distribution $
\augm p(\x_{t+1}|\x_t, \augm a_t) =
\augm p(\itup{s_{t+1}, a_{t+1}} | \itup{s_t, a_t}, \augm a_t)
= p(s_{t+1} | s_t, a_t) \ \delta(a_{t+1} - \augm a_t)
$ \\   % TODO: this line is the most complex with flattened tuples, perhaps add footnote?
(5) reward function $
\augm r(\x_t, {\color{gray} \augm a_t}) = 
\augm r(\itup{s_t, a_t}, {\color{gray} \augm a_t}) = r(s_t, a_t)
$.
\begin{flushright}
\vspace{-0.6cm}
\href{https://github.com/rmst/rtrl/blob/master/rtrl/wrappers.py#L7}{\tiny{(tap to see code)}}
\end{flushright}
\end{definition}
\begin{restatable}{theorem}{TheoremRttb} \!\! \footnote{All proofs are in Appendix~\ref{proofs}.}
A policy $\augm \pi: A \times \X \to \mathbb R$ interacting with $R\!T\!M\!D\!P(E)$ in the conventional, turn-based manner gives rise to the same Markov Reward Process as $\augm \pi$ interacting with $E$ in real-time, i.e.
\begin{equation}
    R\!T\!M\!R\!P(E, \augm \pi) = T\!B\!M\!R\!P(R\!T\!M\!D\!P(E), \augm \pi).
    % E \otimes \augm \pi = R\!T\!M\!D\!P(E) \odot \augm \pi.
\end{equation}
%
% {\hspace{0.1cm} \scriptsize (Proof in Appendix \ref{proofs})}
\end{restatable}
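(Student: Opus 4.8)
The plan is to observe that both sides of the claimed identity are Markov Reward Processes of the form $(S', \augm\mu', \augm\kappa', \bar{\augm r}')$, and that two MRPs are equal exactly when their four components agree. By Def.~\ref{def:RTMRP} and Def.~\ref{def:RTMDP} both sides have system state space $\X = S \times A$, so it suffices to check that the initial distributions, the transition kernels, and the state-reward functions coincide. I would therefore unfold the right-hand side: first form $R\!T\!M\!D\!P(E) = (\X, A, \augm\mu, \augm p, \augm r)$ as in Def.~\ref{def:RTMDP}, then apply the $T\!B\!M\!R\!P$ construction of Def.~\ref{def:TBMRP} with the policy $\augm\pi$, using the defining identities~\eqref{mdp_kernel}, and compare term by term with $R\!T\!M\!R\!P(E,\augm\pi)$.

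The three comparisons are then short computations. The initial distribution is immediate, since $T\!B\!M\!R\!P$ inherits the initial distribution of its input MDP and $R\!T\!M\!D\!P(E)$ carries exactly the $\augm\mu$ appearing in $R\!T\!M\!R\!P(E,\augm\pi)$, namely $\augm\mu(\itup{s_0,a_0}) = \mu(s_0)\,\delta(a_0-c)$. For the transition kernel, the $T\!B\!M\!R\!P$ formula gives $\augm\kappa(\x_{t+1}\mid\x_t) = \int_A \augm p(\x_{t+1}\mid\x_t,\augm a_t)\,\augm\pi(\augm a_t\mid\x_t)\,d\augm a_t$; substituting $\augm p(\itup{s_{t+1},a_{t+1}}\mid\itup{s_t,a_t},\augm a_t) = p(s_{t+1}\mid s_t,a_t)\,\delta(a_{t+1}-\augm a_t)$ and collapsing the integral over $\augm a_t$ via the sifting property of the Dirac delta leaves $p(s_{t+1}\mid s_t,a_t)\,\augm\pi(a_{t+1}\mid s_t,a_t)$, which is precisely the kernel of $R\!T\!M\!R\!P(E,\augm\pi)$. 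For the state-reward function, $\bar{\augm r}(\x_t) = \int_A \augm r(\x_t,\augm a_t)\,\augm\pi(\augm a_t\mid\x_t)\,d\augm a_t$; since $\augm r(\itup{s_t,a_t},\augm a_t) = r(s_t,a_t)$ is independent of $\augm a_t$, it factors out of the integral and $\int_A \augm\pi(\augm a_t\mid\x_t)\,d\augm a_t = 1$, leaving $r(s_t,a_t)$ as required.

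The step I expect to need the most care is the Dirac-delta manipulation: $\delta$ is a distribution rather than a genuine density, so the identity $\int_A f(\augm a)\,\delta(a_{t+1}-\augm a)\,d\augm a = f(a_{t+1})$ should be justified as the defining action of the delta (or, more cleanly, one can avoid $\delta$ entirely by phrasing the argument with pushforward measures and the fact that $a_{t+1} = \augm a_t$ almost surely). A secondary point to get right is the typing of $\augm\pi$: in both constructions it is fed a full state-action pair $\x_t = \itup{s_t,a_t}$, matching the signature $\augm\pi : A \times \X \to \mathbb R$ in the statement, so no reinterpretation of the policy is needed. Once the three components are shown equal, the two MRPs induce identical distributions over state sequences $(s_t)_{t\in\mathbb N}$ and hence over the reward sequences $(r_t)_{t\in\mathbb N} = (\bar{\augm r}(s_t))_{t\in\mathbb N}$, which is exactly the asserted equality $R\!T\!M\!R\!P(E,\augm\pi) = T\!B\!M\!R\!P(R\!T\!M\!D\!P(E),\augm\pi)$.
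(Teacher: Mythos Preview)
Your proposal is correct and follows essentially the same approach as the paper's own proof: unfold $T\!B\!M\!R\!P(R\!T\!M\!D\!P(E),\augm\pi)$ component by component, collapse the Dirac delta in the transition kernel to recover $p(s_{t+1}\mid s_t,a_t)\,\augm\pi(a_{t+1}\mid s_t,a_t)$, and pull the action-independent reward $r(s_t,a_t)$ out of the policy integral. Your additional remarks on justifying the sifting property and on the typing of $\augm\pi$ are sound and more explicit than what the paper writes, but they do not change the argument.
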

% \begin{proof} See Appendix \ref{AppLemmaRttb}. \end{proof}
\vspace{-0.2cm}
Interestingly, the RTMDP is equivalent to a 1-step constant delay MDP (\cite{Walsh2008LearningAP}). However, we believe the different intuitions behind both of them warrant the different names: The constant delay MDP is trying to model external action and observation delays whereas the RTMDP is modelling the time it takes to select an action. The connection makes sense, though: In a framework where the action selection is assumed to be instantaneous, we can apply a delay to account for the fact that the action selection was not instantaneous after all.

\vspace{-0.2cm}

\subsection{Turn-based interaction can be expressed within the real-time framework}
It is also possible to define an augmentation $T\!B\!M\!D\!P(E)$ that allows us to express turn-based environments (e.g. Chess, Go) within the real-time framework (Def.~\ref{def:TBMDP} in the Appendix).
By assigning separate timesteps to agent and environment, we can allow the agent to act while the environment pauses. More specifically, we add a binary variable $b$ to the state to keep track of whether it is the environment's or the agent's turn. While $b$ inverts at every timestep, the underlying environment only advances every other timestep.

\begin{restatable}{theorem}{TheoremTbrt}
A policy $\augm \pi(\augm a | {s, b, a}) = \pi(\augm a|s)$ interacting with $T\!B\!M\!D\!P(E)$ in real time, gives rise to a Markov Reward Process that contains (Def.~\ref{def:MRP-contains}) the MRP resulting from $\pi$ interacting with $E$ in the conventional, turn-based manner, i.e.
\begin{equation}
    T\!B\!M\!R\!P(E, \pi) \propto R\!T\!M\!R\!P(T\!B\!M\!D\!P(E), \augm \pi)
\end{equation}
\end{restatable}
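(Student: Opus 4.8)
The plan is to unfold both sides of $T\!B\!M\!R\!P(E, \pi) \propto R\!T\!M\!R\!P(T\!B\!M\!D\!P(E), \augm \pi)$ into explicit Markov Reward Process tuples, exhibit the projection map and the subsampling of timesteps that witness the ``contains'' relation of Def.~\ref{def:MRP-contains}, and then verify the required distributional identities by composing a single ``agent turn'' of $T\!B\!M\!D\!P(E)$ with a single ``environment turn''. A clean opening move is to use the identity $R\!T\!M\!R\!P(E',\augm\pi)=T\!B\!M\!R\!P(R\!T\!M\!D\!P(E'),\augm\pi)$ (proved above) with $E'=T\!B\!M\!D\!P(E)$ to rewrite the right-hand side as $T\!B\!M\!R\!P(R\!T\!M\!D\!P(T\!B\!M\!D\!P(E)),\augm\pi)$ and reason about the doubly-augmented MDP, but the direct unfolding below is about as short.

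First I would write $R\!T\!M\!R\!P(T\!B\!M\!D\!P(E), \augm\pi)$ out via Def.~\ref{def:RTMRP}: its states are pairs $\augm x_t=(\mathfrak s_t,\augm a_t)$ with $\mathfrak s_t=(s_t,b_t,a_t)$ a state of $T\!B\!M\!D\!P(E)$ and $\augm a_t\in A$; its kernel factorizes as $\augm p_{T\!B\!M\!D\!P}(\cdot\mid \mathfrak s_t,\augm a_t)\,\augm\pi(\cdot\mid \mathfrak s_t,\augm a_t)$; and since $\augm\pi(\augm a\mid s,b,a)=\pi(\augm a\mid s)$, the freshly emitted action $\augm a_{t+1}$ depends on $s_t$ alone. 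Substituting the transition and reward of $T\!B\!M\!D\!P(E)$ from Def.~\ref{def:TBMDP} splits the dynamics into the two cases alternated by $b_t$: the ``environment pauses'' step (the $s$-coordinate is copied, $b$ flips, the reward is $0$) and the ``environment advances'' step (the action delivered by the real-time wrapper is fed to $p$, $b$ flips, the reward is the underlying $r$). Composing one pause step with one advance step gives the effective macro-kernel on the $s$-coordinate. The crucial point is that the one-timestep action delay inherent to $R\!T\!M\!R\!P$ --- the same delay that makes $R\!T\!M\!D\!P$ a $1$-step constant delay MDP --- lines the agent's computation up \emph{exactly} with the advance step: $s_t$ is observed on a pause step, $\augm\pi(\cdot\mid s_t)=\pi(\cdot\mid s_t)$ is emitted, and one step later that sample is the action consumed by $p$ on the following advance step. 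Hence, after marginalizing this intermediate action against $\pi(\cdot\mid s)$, the macro-kernel on $s$ is precisely $\kappa(s'\mid s)=\int_A p(s'\mid s,a)\,\pi(a\mid s)\,da$, and the reward accumulated over the macro-step has conditional expectation $\int_A r(s,a)\,\pi(a\mid s)\,da=\bar r(s)$ given $s$ --- exactly the kernel and state-reward of $T\!B\!M\!R\!P(E,\pi)$ (Def.~\ref{def:TBMRP}).

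With the boundary case checked --- $\augm\mu$ of $T\!B\!M\!D\!P(E)$ starts on a pause step with $s_0\sim\mu$, so the very first macro-step already outputs $s_0\sim\mu$ followed by $s_1\sim\kappa(\cdot\mid s_0)$, with no spurious warm-up transition --- I would conclude as follows. Take $\phi$ to be the projection $\augm x_t\mapsto s_t$ onto the underlying-MDP state coordinate and take the stride-$2$ subsequence $(\augm x_0,\augm x_2,\augm x_4,\dots)$. The computation above shows that $(\phi(\augm x_0),\phi(\augm x_2),\dots)$ is distributed as the Markov chain of $T\!B\!M\!R\!P(E,\pi)$; it is genuinely Markov after the projection because each emitted action, hence each transition, is conditionally independent of the past given the current $s$; and the accumulated-reward bookkeeping matches $\bar r$ in the conditional-expectation sense in which Def.~\ref{def:MRP-contains} is phrased. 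Feeding $\phi$ and the stride into Def.~\ref{def:MRP-contains} then yields $T\!B\!M\!R\!P(E,\pi)\propto R\!T\!M\!R\!P(T\!B\!M\!D\!P(E),\augm\pi)$.

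The step I expect to be the main obstacle is the index/alignment bookkeeping: I must pin down the precise convention of Def.~\ref{def:TBMDP} (which turn comes first, whether the advance step reads the incoming action or the stored one, and what becomes of the action slot $a$ and the turn bit $b$) so that the one-step real-time delay lands on the advance step rather than on the pause step, and then argue that the now-stale action carried in $a$ and the bit $b$ never leak into the projected $s$-process. The secondary subtlety is the reward: the per-step reward of the large MRP is a \emph{sampled} $r(s,a)$ rather than the averaged $\bar r(s)$, so the correspondence holds at the level of (expected) accumulated reward, which is precisely what the $\propto$ relation of Def.~\ref{def:MRP-contains} is built to capture; I would make sure the final invocation of that definition is stated in exactly its native form.
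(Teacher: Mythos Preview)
Your proposal is correct and matches the paper's proof exactly: unfold $R\!T\!M\!R\!P(T\!B\!M\!D\!P(E),\augm\pi)$ on state space $S\times\{0,1\}\times A$, take the stride-$2$ sub-MRP starting from the $b=0$ states, and reduce via the projection $f(s,b,a)=s$. Your anticipated obstacles dissolve cleanly under the paper's definitions --- the sub-MRP state-reward is the $2$-step value $v^2_\Psi(s,0,a)=\int_A r(s,a')\,\pi(a'\mid s)\,da'=\bar r(s)$, which is already independent of the stale action $a$, so the reduction condition of Def.~\ref{def:MRP-reduction} holds exactly rather than merely in conditional expectation; one notational slip to fix is that a $T\!B\!M\!D\!P(E)$ state is $(s,b)$, so the RTMRP state is the triple $(s,b,a)$ with no further $\augm a_t$ layer on top.
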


\vspace{-0.2cm}

As a result, not only can we use conventional algorithms in the real-time framework but we can use algorithms built on the real-time framework for all turn-based problems.

\newpage

\section{Reinforcement Learning in Real-Time Markov Decision Processes} \label{SectionSacInRt}
Having established the RTMDP as a compatibility layer between conventional RL and RTRL, we can now look how existing theory changes when moving from an environment $E$ to $R\!T\!M\!D\!P(E)$.

Since most RL methods assume that the environment's dynamics are completely unknown, they will not be able to make use of the fact that we precisely know part of the dynamics of RTMDP. Specifically they will have to learn from data, the effects of the "feed-through" mechanism which could lead to much slower learning and worse performance when applied to an environment $R\!T\!M\!D\!P(E)$ instead of $E$.
This could especially hurt the performance of off-policy algorithms which have been among the most successful RL methods to date \citep{mnih2015human, haarnoja2018soft}.
Most off-policy methods make use of the action-value function.
\begin{definition}
The action value function $q_E^\pi$ for an environment $E=(S, A, \mu, p, r)$ and a policy $\pi$ can be recursively defined as
\begin{equation} \label{EqQ}
    q_E^\pi(s_t, a_t) = r(s_t, a_t) + \E_{s_{t+1} \sim p(\cdot | s_t, a_t)}[ \E_{a_{t+1} \sim \pi(\cdot | s_{t+1})}[ q_E^\pi(s_{t+1}, a_{t+1})]]
\end{equation}
\end{definition}
When this identity is used to train an action-value estimator, the transition $s_t, a_t, s_{t+1}$ can be sampled from a replay memory containing off-policy experience while the next action $a_{t+1}$ is sampled from the policy $\pi$.
\begin{restatable}{lemma}{LemmaActionValue}
\label{lemma:statevalue}
% \begin{lemma}
In a Real-Time Markov Decision Process for the action-value function we have
\begin{equation}
\medmuskip=0mu
\thinmuskip=0mu
\thickmuskip=0mu
% q_\text{\textit{RTMDP(E)}}^{\augm \pi}(\augm x_t, \augm a_t) = 
q_\text{\textit{RTMDP(E)}}^{\augm \pi}(\itup{s_t, a_t}, \augm a_t) =
r(s_t, a_t) + \E_{s_{t+1} \sim p(\cdot | s_t, a_t)}[ \E_{\augm a_{t+1} \sim \augm \pi(\cdot | \itup{s_{t+1}, \augm a_t})} [q_\text{\textit{RTMDP(E)}}^{\augm \pi}(\itup{s_{t+1}, \augm a_t}, \augm a_{t+1})]]
\end{equation}
% 
% \end{lemma}
\end{restatable}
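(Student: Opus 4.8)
The plan is to obtain the identity by directly instantiating the general action-value recursion of Eq.~\eqref{EqQ} for the augmented environment $R\!T\!M\!D\!P(E)$ and then simplifying with the explicit transition kernel and reward given in Definition~\ref{def:RTMDP}. Concretely, applying Eq.~\eqref{EqQ} with $R\!T\!M\!D\!P(E) = (\X, A, \augm \mu, \augm p, \augm r)$ in place of $E$, at the system state $\x_t = \itup{s_t, a_t}$ and policy action $\augm a_t$, gives
\begin{equation*}
q^{\augm \pi}_{R\!T\!M\!D\!P(E)}(\x_t, \augm a_t)
= \augm r(\x_t, \augm a_t)
+ \E_{\x_{t+1} \sim \augm p(\cdot | \x_t, \augm a_t)}\bigl[\, \E_{\augm a_{t+1} \sim \augm \pi(\cdot | \x_{t+1})}\bigl[ q^{\augm \pi}_{R\!T\!M\!D\!P(E)}(\x_{t+1}, \augm a_{t+1}) \bigr]\bigr].
\end{equation*}
So the whole proof is a matter of substituting the definitions of $\augm r$ and $\augm p$ and reading off the result.

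First I would replace $\augm r(\x_t, \augm a_t)$ by $r(s_t, a_t)$, using clause (5) of Definition~\ref{def:RTMDP}. Next I would expand the outer expectation over $\x_{t+1} = \itup{s_{t+1}, a_{t+1}} \in S \times A$ as an integral against $\augm p(\itup{s_{t+1}, a_{t+1}} \mid \itup{s_t, a_t}, \augm a_t) = p(s_{t+1} \mid s_t, a_t)\,\delta(a_{t+1} - \augm a_t)$, i.e. as $\int_S \int_A (\cdot)\, p(s_{t+1}\mid s_t,a_t)\,\delta(a_{t+1}-\augm a_t)\, d a_{t+1}\, d s_{t+1}$. The $a_{t+1}$-integral against the Dirac delta then sets $a_{t+1} = \augm a_t$ with probability one, so the next system state is $\itup{s_{t+1}, \augm a_t}$ everywhere it occurs -- both as the conditioning variable of $\augm \pi$ and as the first argument of $q^{\augm \pi}_{R\!T\!M\!D\!P(E)}$ -- and what remains outside is exactly $\E_{s_{t+1}\sim p(\cdot\mid s_t,a_t)}$. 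Collecting terms yields the claimed recursion.

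The one point that needs a little care -- and the only real obstacle -- is the manipulation of the Dirac delta: one has to justify that $\int_A f(a_{t+1})\,\delta(a_{t+1} - \augm a_t)\, d a_{t+1} = f(\augm a_t)$ is legitimate when $f(a_{t+1}) = \E_{\augm a_{t+1} \sim \augm \pi(\cdot \mid \itup{s_{t+1}, a_{t+1}})}[q^{\augm \pi}_{R\!T\!M\!D\!P(E)}(\itup{s_{t+1}, a_{t+1}}, \augm a_{t+1})]$, and that this collapse commutes with the remaining $s_{t+1}$-integration. This is the same (routine) abuse of notation already made in Definitions~\ref{def:RTMRP} and~\ref{def:RTMDP}, so I would simply invoke it; a fully rigorous treatment would instead read $\augm p$ as a kernel that is deterministic in its action coordinate and use the change-of-variables formula for pushforward measures. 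Alternatively, one can bypass the delta altogether by transporting the standard $q$-recursion through the equivalence $R\!T\!M\!R\!P(E, \augm\pi) = T\!B\!M\!R\!P(R\!T\!M\!D\!P(E), \augm\pi)$ established earlier, but the direct substitution above is the shortest route.
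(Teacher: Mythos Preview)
Your proposal is correct and follows essentially the same route as the paper's own proof: instantiate the general $q$-recursion (Eq.~\ref{EqQ}) in $R\!T\!M\!D\!P(E)$, substitute $\augm r$ and $\augm p$ from Definition~\ref{def:RTMDP}, and integrate out the Dirac delta to replace $a_{t+1}$ by $\augm a_t$. Your extra remarks about the legitimacy of the delta manipulation and the alternative via Theorem~1 are fine but go beyond what the paper does; the paper simply writes out the three-line computation without further justification.
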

Note that the action $\augm a_t$ does not affect the reward nor the next state. The only thing that $\augm a_t$ does affect is $a_{t+1}$ which, in turn, only in the next timestep will affect $r(s_{t+1}, a_{t+1})$ and $s_{t+2}$. To learn the effect of an action on $E$ (specifically the future rewards), we now have to perform two updates where previously we only had to perform one. We will investigate experimentally the effect of this on the off-policy Soft Actor-Critic algorithm \citep{haarnoja2018soft} in Section \ref{SacStruggles}.

\subsection{Learning the state-value function off-policy} \label{SectionOffPolicyValue}
The state-value function can usually not be used in the same way as the action-value function for off-policy learning.
\begin{definition}
The state-value function $v_E^\pi$ for an environment $E=(S, A, \mu, p, r)$ and a policy $\pi$ is
\begin{equation}
v_E^\pi(s_t) = \E_{a_t \sim \pi(\cdot | s_t)}[r(s_t, a_t) + \E_{s_{t+1} \sim p(\cdot | s_t, a_t)}[v_E^\pi(s_{t+1})]]
\end{equation}
\end{definition}
The definition shows that the expectation over the action is taken \emph{before} the expectation over the next state. When using this identity to train a state-value estimator, we cannot simply change the action distribution to allow for off-policy learning since we have no way of resampling the next state.
\begin{restatable}{lemma}{LemmaStateValue}
% \begin{lemma}
In a Real-Time Markov Decision Process for the state-value function we have
\begin{equation} \label{EqV}
% v_\text{\textit{RTMDP(E)}}^{\augm \pi}(\augm x_t) =
v_\text{\textit{RTMDP(E)}}^{\augm \pi}(\itup{s_t, a_t}) = r(s_t, a_t) + \E_{s_{t+1} \sim p(\cdot | s_t, a_t)} [\E_{\augm a_t \sim \augm \pi(\cdot | \itup{s_{t}, a_t})}[ v_\text{\textit{RTMDP(E)}}^{\augm \pi}(\itup{s_{t+1}, \augm a_{t}}) ]].
\end{equation}
%
% \end{lemma}
\end{restatable}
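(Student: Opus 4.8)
The plan is to instantiate the generic state-value recursion for the augmented environment $R\!T\!M\!D\!P(E)$ and then collapse the two nested expectations using the deterministic ``feed-through'' structure of $\augm p$ and the trivial dependence of $\augm r$ on $\augm a_t$ recorded in Definition~\ref{def:RTMDP}. Concretely, writing a system state as $\x_t = \itup{s_t, a_t} \in \X = S \times A$, the definition of the state-value function applied directly to $R\!T\!M\!D\!P(E) = (\X, A, \augm \mu, \augm p, \augm r)$ gives
\begin{equation*}
v_\text{\textit{RTMDP(E)}}^{\augm \pi}(\x_t) = \E_{\augm a_t \sim \augm \pi(\cdot | \x_t)}\Bigl[\augm r(\x_t, \augm a_t) + \E_{\x_{t+1} \sim \augm p(\cdot | \x_t, \augm a_t)}\bigl[v_\text{\textit{RTMDP(E)}}^{\augm \pi}(\x_{t+1})\bigr]\Bigr].
\end{equation*}
From here the proof is a matter of substituting the components of $R\!T\!M\!D\!P(E)$ and simplifying.

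First I would use $\augm r(\itup{s_t, a_t}, \augm a_t) = r(s_t, a_t)$, which does not depend on $\augm a_t$, to pull the reward term out of the outer expectation (the density $\augm \pi(\cdot | \x_t)$ integrates to one). Next, writing $\x_{t+1} = \itup{s_{t+1}, a_{t+1}}$ and substituting $\augm p(\itup{s_{t+1}, a_{t+1}} | \itup{s_t, a_t}, \augm a_t) = p(s_{t+1} | s_t, a_t)\,\delta(a_{t+1} - \augm a_t)$, the integral over $a_{t+1}$ collapses against the Dirac delta and forces $a_{t+1} = \augm a_t$, leaving $\E_{s_{t+1} \sim p(\cdot | s_t, a_t)}\bigl[v_\text{\textit{RTMDP(E)}}^{\augm \pi}(\itup{s_{t+1}, \augm a_t})\bigr]$. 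At this point we have exactly the desired right-hand side but with the expectation over $\augm a_t$ on the outside and the expectation over $s_{t+1}$ on the inside.

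The last step is to exchange the order of these two expectations. Conditioned on $\x_t = \itup{s_t, a_t}$, the next-state component is drawn from $p(\cdot | s_t, a_t)$ and the emitted action from $\augm \pi(\cdot | \itup{s_t, a_t})$ --- neither conditioning set mentions the other variable --- so the two draws are independent and Fubini's theorem applies (the integrand is a value function, bounded under the standard assumptions used throughout), yielding Equation~\eqref{EqV}. I expect the only point requiring any care to be the Dirac-delta manipulation in the transition kernel, i.e.\ justifying that integrating $v$ against $p(s_{t+1}|s_t,a_t)\,\delta(a_{t+1}-\augm a_t)$ over $\x_{t+1}$ is the same as substituting $a_{t+1} = \augm a_t$; this is precisely the device already relied upon in the $R\!T\!M\!D\!P$/$R\!T\!M\!R\!P$ equivalence theorem, and can be invoked here in the same spirit. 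Everything else is bookkeeping.
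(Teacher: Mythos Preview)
Your proposal is correct and mirrors the paper's own proof: both start from the generic state-value recursion in $R\!T\!M\!D\!P(E)$, pull the $\augm a_t$-independent reward $r(s_t,a_t)$ out front, collapse the Dirac delta in $\augm p$ to replace $a_{t+1}$ by $\augm a_t$, and then swap the $s_{t+1}$ and $\augm a_t$ expectations. If anything, you are slightly more explicit than the paper in naming the independence/Fubini justification for the swap, which the paper leaves implicit in its final line.
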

Here, $s_t, a_t, s_{t+1}$ is always a valid transition no matter what action $\augm a_t$ is selected. Therefore, when using the real-time framework, we can use the value function for off-policy learning.
% This allows for partial simulation in the agent.
Since Equation~\ref{EqV} is the same as Equation~\ref{EqQ} (except for the policy inputs), we can use the state-value function where previously the action-value function was used without having to learn the dynamics of the $R\!T\!M\!D\!P$ from data since they have already been applied to Equation \ref{EqV}.
\subsection{Partial simulation}
The off-policy learning procedure described in the previous section can be applied more generally. Whenever parts of the agent-environment system are known and (temporarily) independent of the remaining system, they can be used to generate synthetic experience. More precisely, transitions\hide{ (or longer sub-trajectories) } with a start state $s = (w, z)$ can be generated according to the true transition kernel $\kappa(s'|s)$ by simulating the known part of the transition ($w \to w'$) and using a stored sample for the unknown part of the transition ($z \to z'$). This is only possible if the transition kernel factorizes~as~${\kappa(w', z'| s) = \kappa_\text{known}(w'|s) \ \kappa_\text{unknown}(z'|s)}$. Hindsight Experience Replay \citep{andrychowicz2017hindsight} can be seen as another example of partial simulation. There, the goal part of the state evolves independently of the rest which allows for changing the goal in hindsight. In the next section, we use the same partial simulation principle to compute the gradient of the policy loss.

\section{Real-Time Actor-Critic (RTAC)}
Actor-Critic algorithms \citep{konda2000actor} formulate the RL problem as bi-level optimization where the critic evaluates the actor as accurately as possible while the actor tries to improve its evaluation by the critic. \citet{silver2014deterministic} showed that it is possible to reparameterize the actor evaluation and directly compute the pathwise derivative from the critic with respect to the actor parameters and thus telling the actor how to improve. \citet{heess2015learning} extended that to stochastic policies and \cite{haarnoja2018soft} further extended it to the maximum entropy objective to create Soft Actor-Critic (SAC) which RTAC is going to be based on and compared against.

In SAC a parameterized policy $\pi$ (the actor)
% \footnote{We are referring to both the policy as well as its parameters as $\pi$.} 
is optimized to minimize the KL-divergence between itself and the exponential of an approximation of the action-value function $q$ (the critic) normalized by $Z$ (where $Z$ is unknown but irrelevant to the gradient) giving rise to the policy loss
\begin{equation}\label{ActorLoss}
    L^\text{SAC}_{E, \pi} = \E_{s_t \sim D} \KL(\pi(\cdot | s_t) || \exp (\tfrac 1 \alpha q(s_t, \cdot)) / Z(s_t))
    \footnote{$\alpha$ is a temperature hyperparameter. For $\alpha \to 0$, the maximum entropy objective reduces to the traditional objective. To compare with the hyperparameters table we have $\alpha = \frac{\text{entropy scale}}{\text{reward scale}}$.}
\end{equation}
where $D$ is a uniform distribution over the replay memory containing past states, actions and rewards. The action-value function itself is optimized to fit Equation \ref{EqQ} presented in the previous section (augmented with an entropy term). We can thus expect SAC to perform worse in RTMDPs.

In order to create an algorithm better suited for the real-time setting we propose to use a state-value function approximator ${\augm v}$ as the critic instead, that will give rise to the same policy gradient.
\begin{restatable}{prop}{PropRtac}
% \begin{prop}
The following policy loss based on the state-value function
\begin{equation}\label{RtacActorLoss}
    L^\text{RTAC}_{R\!T\!M\!D\!P(E), \augm \pi} = \E_{(s_t, a_t) \sim D} \E_{s_{t+1}\sim p(\cdot | s_t, a_t)} \KL(\augm \pi(\cdot | \itup{s_t, a_t}) || \exp (\tfrac 1 \alpha \gamma {\augm v}(\itup{s_{t+1}, \cdot})) / Z(s_{t+1}))
\end{equation}
has the same policy gradient as {\small $L^\text{SAC}_{R\!T\!M\!D\!P(E), \augm \pi}$}, i.e.
\begin{equation}
    \nabla_{\augm \pi} L^\text{RTAC}_{R\!T\!M\!D\!P(E), \augm \pi}
    = \nabla_{\augm \pi} L^\text{SAC}_{R\!T\!M\!D\!P(E), \augm \pi}
\end{equation}
% We need an extra $\gamma$ in the exponential to account for the discounting of the value function.
% \end{prop}
\end{restatable}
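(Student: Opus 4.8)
The plan is to expand both Kullback--Leibler divergences, use (the discounted form of) the action-value identity of Lemma~\ref{lemma:statevalue} to re-express the critic term of $L^{\mathrm{SAC}}_{R\!T\!M\!D\!P(E),\augm\pi}$ through $\augm v$, and then show that $L^{\mathrm{SAC}}_{R\!T\!M\!D\!P(E),\augm\pi}$ and $L^{\mathrm{RTAC}}_{R\!T\!M\!D\!P(E),\augm\pi}$ agree up to an additive term that does not depend on $\augm\pi$; equality of the policy gradients is then immediate.

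Writing $\x_t=(s_t,a_t)$, I would first expand the SAC loss as
\begin{equation*}
\E_{\x_t\sim D}\,\KL\big(\augm\pi(\cdot|\x_t)\,\big\|\,\tfrac{1}{Z(\x_t)}\exp(\tfrac1\alpha q(\x_t,\cdot))\big)=\E_{\x_t\sim D}\,\E_{\augm a_t\sim\augm\pi(\cdot|\x_t)}\!\big[\log\augm\pi(\augm a_t|\x_t)-\tfrac1\alpha q(\x_t,\augm a_t)\big]+\E_{\x_t\sim D}\log Z(\x_t),
\end{equation*}
and likewise the RTAC loss, with $q(\x_t,\cdot)$ replaced by $\gamma\,\augm v(s_{t+1},\cdot)$, the inner expectation preceded by $\E_{s_{t+1}\sim p(\cdot|s_t,a_t)}$, and $Z(\x_t)$ by $Z(s_{t+1})$. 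By Lemma~\ref{lemma:statevalue} (in discounted form), recognizing the inner expectation over $\augm a_{t+1}$ there as $\augm v$, we have $q(\x_t,\augm a_t)=r(s_t,a_t)+\gamma\,\E_{s_{t+1}\sim p(\cdot|s_t,a_t)}[\augm v(s_{t+1},\augm a_t)]$, so the SAC critic term turns into $-\tfrac1\alpha r(s_t,a_t)-\tfrac\gamma\alpha\,\E_{s_{t+1}}[\augm v(s_{t+1},\augm a_t)]$. Since $r(s_t,a_t)$ does not depend on $\augm a_t$, the reward contributes only the $\augm\pi$-independent term $-\tfrac1\alpha\E_{\x_t\sim D}[r(s_t,a_t)]$, and both normalizers $\log Z$ depend only on the (fixed) critic. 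Hence, modulo terms independent of $\augm\pi$, both losses reduce to
\begin{equation*}
\E_{\x_t\sim D}\,\E_{\augm a_t\sim\augm\pi(\cdot|\x_t)}\!\big[\log\augm\pi(\augm a_t|\x_t)\big]-\tfrac\gamma\alpha\,\E_{\x_t\sim D}\,\E_{s_{t+1}\sim p(\cdot|s_t,a_t)}\,\E_{\augm a_t\sim\augm\pi(\cdot|\x_t)}\!\big[\augm v(s_{t+1},\augm a_t)\big],
\end{equation*}
where I have used that $\log\augm\pi(\augm a_t|\x_t)$ does not depend on $s_{t+1}$ (for the RTAC side) and that $\E_{\augm a_t}\E_{s_{t+1}}[\augm v]=\E_{s_{t+1}}\E_{\augm a_t}[\augm v]$ (for the SAC side, after expanding $q$).

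The only genuinely non-routine point---and where I expect the argument to need the most care---is that last interchange of expectations. It is valid precisely because in $R\!T\!M\!D\!P(E)$ the transition kernel factorizes as $\augm p(s_{t+1},a_{t+1}\,|\,s_t,a_t,\augm a_t)=p(s_{t+1}\,|\,s_t,a_t)\,\delta(a_{t+1}-\augm a_t)$, so that, conditioned on $\x_t$, the next underlying state $s_{t+1}$ and the sampled action $\augm a_t\sim\augm\pi(\cdot|\x_t)$ are independent---their joint law is a product measure---and Fubini applies (assuming, e.g., bounded $\augm v$). This is exactly the ``partial simulation'' observation of Section~\ref{SectionSacInRt}: $s_{t+1}$ may be taken from the stored replay transition $p(\cdot|s_t,a_t)$ regardless of $\augm a_t$. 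In a generic MDP the next state is distributed as $p(\cdot|s_t,\augm a_t)$, which depends on $\augm a_t$, and neither the interchange nor the ``reuse the stored $s_{t+1}$'' step is available---this is why the equivalence holds in $R\!T\!M\!D\!P(E)$ but fails for $E$ itself. Once both losses are seen to agree up to a $\augm\pi$-independent constant, $\nabla_{\augm\pi}L^{\mathrm{RTAC}}_{R\!T\!M\!D\!P(E),\augm\pi}=\nabla_{\augm\pi}L^{\mathrm{SAC}}_{R\!T\!M\!D\!P(E),\augm\pi}$ follows at once.
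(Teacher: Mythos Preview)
Your argument is correct, and it arrives at the same conclusion via a somewhat different route than the paper. The paper does not show that the two losses differ by a $\augm\pi$-independent constant; instead it reparameterizes both KL objectives with $\augm a_t=\augm h_{\augm\pi}(\x_t,\epsilon)$ (as in \cite{haarnoja2018soft}) and compares the resulting pathwise gradients directly, reducing the question to the identity
\[
\nabla_{\augm a_t} q(\x_t,\augm a_t)=\underbrace{\nabla_{\augm a_t}\augm r(\x_t,\augm a_t)}_{=0}+\gamma\,\nabla_{\augm a_t}\E_{\x_{t+1}\sim\augm p(\cdot|\x_t,\augm a_t)}[\augm v(\x_{t+1})]=\gamma\,\nabla_{\augm a_t}\E_{s_{t+1}\sim p(\cdot|\x_t)}[\augm v(s_{t+1},\augm a_t)].
\]
Both arguments rest on exactly the same two RTMDP-specific facts you isolate---$\augm r(\x_t,\augm a_t)=r(s_t,a_t)$ is independent of $\augm a_t$, and $s_{t+1}$ is conditionally independent of $\augm a_t$ given $\x_t$---and both invoke the (soft) Bellman relation $q=\augm r+\gamma\,\E[\augm v]$ to link the two critics. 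Your approach is a bit more elementary (no reparameterization machinery, the equality holds already at the level of the losses), while the paper's version is phrased in terms of the actual gradient estimator used by SAC in practice and thus connects more transparently to implementation. The ``interchange of expectations'' step you flag as delicate is, in the paper's version, the step from $\E_{\x_{t+1}\sim\augm p(\cdot|\x_t,\augm a_t)}$ to $\E_{s_{t+1}\sim p(\cdot|\x_t)}$; it is the same content in a different guise.
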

The value function itself is trained off-policy according to the procedure described in Section \ref{SectionOffPolicyValue} to fit an augmented version of Equation \ref{EqV}, specifically
\begin{equation} \label{EqVtarget}
\augm v_\text{target}
= r(s_t, a_t) + \E_{s_{t+1} \sim p(\cdot | s_t, a_t)} [\E_{\augm a_t \sim \augm \pi(\cdot | s_{t}, a_t)}[\bar {\augm v}_{\bar \theta}((s_{t+1}, \augm a_{t})) - \alpha \log({\augm \pi}(\augm a_t | s_{t}, a_t))]].
\end{equation}
% $$v_\text{target} = \augm r_t +  \gamma  \E_{a_{t+1} \sim \pi(\x_t)}[v_{\bar\theta}((s_{t+1}, a_{t+1}))] $$
Therefore, for the value loss, we have
\begin{equation}
    L^\text{RTAC}_{R\!T\!M\!D\!P(E), {\augm v}}
    = \E_{(\x_t, \augm r_t, s_{t+1}) \sim D}[({\augm v}(\x_t) - \augm v_\text{target})^2]
\end{equation}
% $$L_v(\theta) = \E_{(\x_t, \augm r_t, s_{t+1}) \sim D}[(v_\theta(\x_t) - v_\text{target}(\x_t))^2]$$

\subsection{Merging actor and critic}
Using the state-value function as the critic has another advantage: When evaluated at the same timestep, the critic does not depend on the actor's output anymore and we are therefore able to use a single neural network to represent both the actor and the critic. Merging actor and critic makes it necessary to trade off between the value function and policy loss. Therefore, we introduce an additional hyperparameter $\beta$.
\begin{equation}
L(\theta)
% = L((\augm \pi, \tilde{\augm v}))
= \beta L^\text{RTAC}_{R\!T\!M\!D\!P(E), \augm \pi_\theta} + (1-\beta) L^\text{RTAC}_{R\!T\!M\!D\!P(E), {\augm v_\theta}}
\label{loss}
\end{equation}
Merging actor and critic could speed up learning and even improve generalization, but could also lead to greater instability. We compare RTAC with both merged and separate actor and critic networks in Section~\ref{SectionExperiments}.

\subsection{Stabilizing learning}

\hide{
\begin{wrapfigure}{l}{0.4\textwidth}
% \vspace{-0.5cm}
\vspace{-0.51cm}
\begin{algorithm}[H]
\SetCustomAlgoRuledWidth{0.45\textwidth}
% \SetAlgoLined
% \KwResult{Write here the result }
 Initialize parameter vectors $\theta, \bar\theta$ \\
 \For{each iteration}{
  \For{each environment step}{
    $a_{t+1} \sim \pi(\cdot | s_t, a_t)$ \\
    $s_{t+1} \sim p(\cdot | s_t, a_t)$ \\
    $D \leftarrow D \cup \{(s_t, a_t, r_t, s_{t+1})\}$ \\
  }
  \For{each gradient step}{
    $\theta \leftarrow \theta + \lambda \nabla_\theta L(\theta) \quad $ Eqn. \ref{loss} \\
    $\bar\theta \leftarrow \tau \theta + (1-\tau) \bar\theta$
  }
 }
\begin{flushright}
\vspace{-0.3cm}
\href{https://github.com/rmst/rtrl/blob/master/rtrl/rtac.py}{\tiny{(tap to see code)}}
\end{flushright}
 \caption{Real-Time Actor-Critic}
\end{algorithm}
\vspace{-0.5cm}
\end{wrapfigure}
}
% Actor-Critic algorithms, like Generative Adversarial Networks \citep{goodfellow2014generative} are bi-level optimization problems \citep{pfau2016connecting} where one parametric model (e.g. policy or generator) is optimized with respect to another (e.g. critic or discriminator). This makes the optimization difficult and unstable.

% In contrast to supervised learning, in bilevel optimization the gradient vector field that determines the parameter updates in (simultaneous) stochastic gradient descend is both non-conservative \citep{ferenc_noncon} and changes over the course of optimization.
Actor-Critic algorithms are known to be unstable during training. We use a number of techniques that help make training more stable. Most notably we use Pop-Art output normalization \citep{van2016learning} to normalize the value targets. This is necessary if $v$ and $\pi$ are represented using an overlapping set of parameters. Since the scale of the error gradients of the value loss is highly non-stationary it is hard to find a good trade-off between policy and value loss ($\beta$). If $v$ and $\pi$ are separate, Pop-Art matters less, but still improves performance both in SAC as well as in RTAC.

Another difficulty are the recursive value function targets. Since we try to maximize the value function, overestimation errors in the value function approximator are amplified and recursively used as target values in the following optimization steps. As introduced by \citet{fujimoto2018addressing} and like SAC, we will use two value function approximators and take their minimum when computing the target values to reduce value overestimation, i.e. $\bar {\augm v}_{\bar\theta}(\cdot) =  \min_{i\in\{1, 2\}} \ \augm v_{\bar\theta, i}(\cdot)$.

Lastly, to further stabilize the recursive value function estimation, we use target networks that slowly track the weights of the network \citep{mnih2015human, lillicrap2015continuous}, i.e. $\bar\theta \leftarrow \tau \theta + (1-\tau) \bar\theta$. The tracking weights $\bar \theta$ are then used to compute $\augm v_\text{target}$ in Equation \ref{EqVtarget}.

\vspace{-.15cm}

\section{Experiments} \label{SectionExperiments}
\vspace{-.15cm}

We compare Real-Time Actor-Critic to Soft Actor-Critic \citep{haarnoja2018soft} on several OpenAI-Gym/MuJoCo benchmark environments \citep{gym, todorovET12} as well as on two Avenue autonomous driving environments with visual observations \citep{ibrahim2019avenue}.

The SAC agents used for the results here, include both an action-value and a state-value function approximator and use a fixed entropy scale $\alpha$ (as in \citet{haarnoja2018soft}). In the code accompanying this paper we dropped the state-value function approximator since it had no impact on the results (as done and observed in \citet{haarnoja2018soft2}).
For a comparison to other algorithms such as DDPG, PPO and TD3 also see \citet{haarnoja2018soft, haarnoja2018soft2}.

% \paragraph{Implementation}
To make the comparison between the two algorithms as fair as possible, we also use output normalization in SAC which improves performance on all tasks (see Figure~\ref{SacNoNorm} in Appendix~\ref{AdditionalExperiments} for a comparison between normalized and unnormalized SAC).
% We both show the performance of SAC in real-time and non-real-time environments.
% The performance of our SAC implementation in the non-real-time environments matches \citet{haarnoja2018soft, haarnoja2018soft2} almost exactly.
Both SAC and RTAC are performing a single optimization step at every timestep in the environment starting after the first $10000$ timesteps of collecting experience based on the initial random policy.
The hyperparameters used can be found in Table~\ref{Hyperparameters}.

% \paragraph{Figures} All figures show return trends over several runs. For each run, the test return is computed each $20000$ timesteps as the average return over $100000$ timesteps using a deterministic policy. For each run the test returns are then smoothed with window size $0.1 \times$number of test returns per run. The return trends show the mean over all runs of the smoothed test returns whereas the shaded region is the $95\%$ confidence interval assuming independently, normally distributed data points with unknown mean and variance. 

\vspace{-.15cm}

\subsection{SAC in Real-Time Markov Decision Processes} \label{SacStruggles}
\vspace{-.15cm}

When comparing the return trends of SAC in turn-based environments $E$ against SAC in real-time environments $R\!T\!M\!D\!P(E)$, the performance of SAC deteriorates. This seems to confirm our hypothesis that having to learn the dynamics of the augmented environment from data impedes action-value function approximation (as hypothesized in Section~\ref{SectionSacInRt}). 

\begin{figure}[H] \label{RtVsTb}
  \centering
  \includegraphics[trim=0.7cm 0 0 0, width=1.02\linewidth]{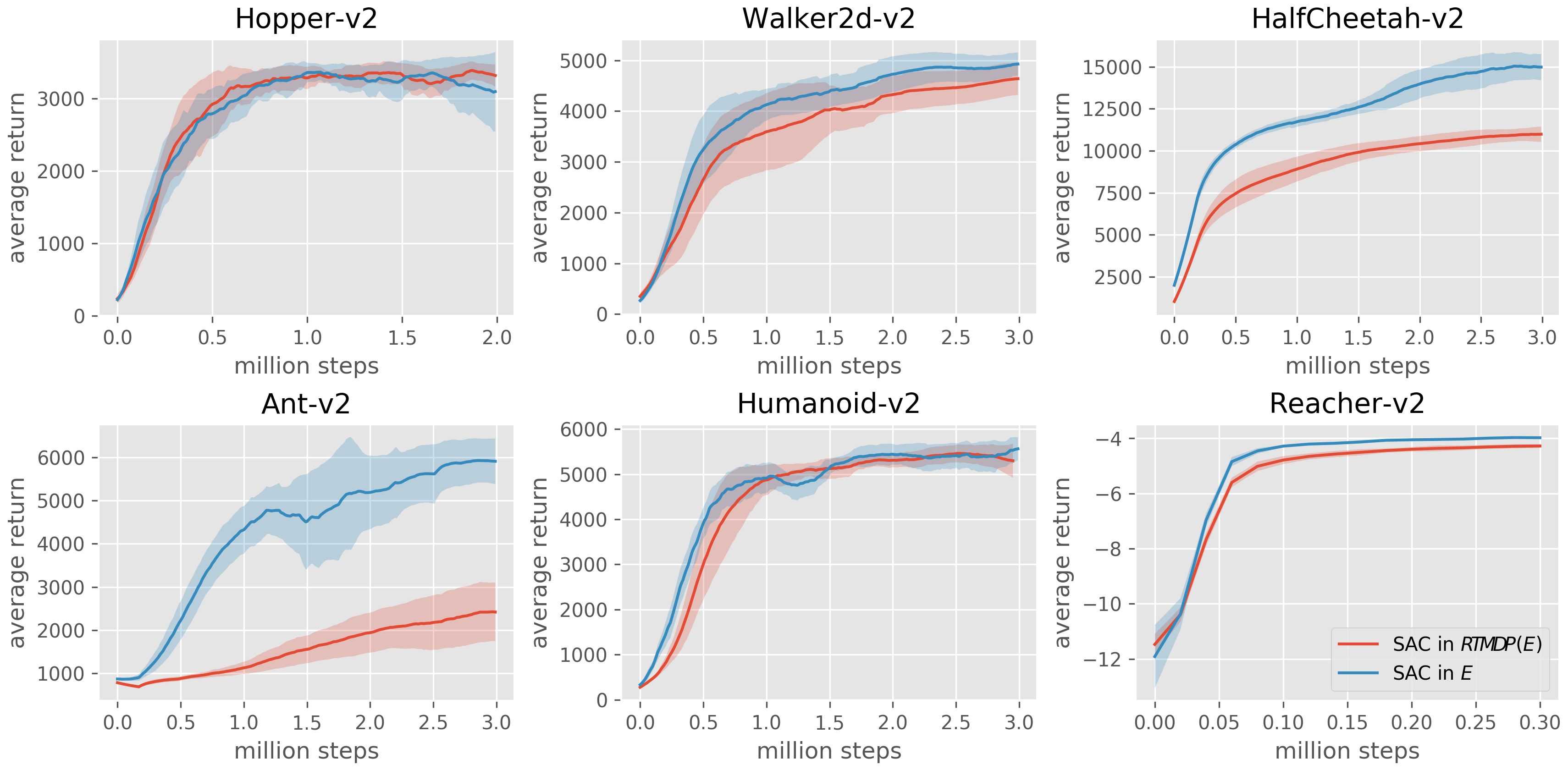}
  \caption{Return trends for SAC in turn-based environments $E$ and real-time environments $R\!T\!M\!D\!P(E)$. Mean and $95\%$ confidence interval are computed over eight training runs per environment.}
\end{figure}

\subsection{RTAC and SAC on MuJoCo in real time} \label{RtacWorks}

Figure~\ref{RtacSacComp} shows a comparison between RTAC and SAC in real-time versions of the benchmark environments. We can see that RTAC learns much faster and achieves higher returns than SAC in $R\!T\!M\!D\!P(E)$. This makes sense as it does not have to learn from data the "pass-through" behavior of the RTMDP. We show RTAC with separate neural networks for the policy and value components showing that a big part of RTAC's advantage over SAC is its value function update. However, the fact that policy and value function networks can be merged 
further improves RTAC's performance as the plots suggest. Note that RTAC is always in $R\!T\!M\!D\!P(E)$, therefore we do not explicitly state it again.

RTAC is even outperforming SAC in $E$ (when SAC is allowed to act without real-time constraints) in four out of six environments including the two hardest ones - Ant and Humanoid - with largest state and action space (Figure~\ref{RtacSacTb}). We theorize this is possible due to the merged actor and critic networks used in RTAC. It is important to note however, that for RTAC with merged actor and critic networks output normalization is critical (Figure~\ref{RtacNoNorm}). 

\vspace{-0.3cm}

\begin{figure}[H]
  \centering
  \includegraphics[trim=0.7cm 0 0 0, width=1.0\linewidth]{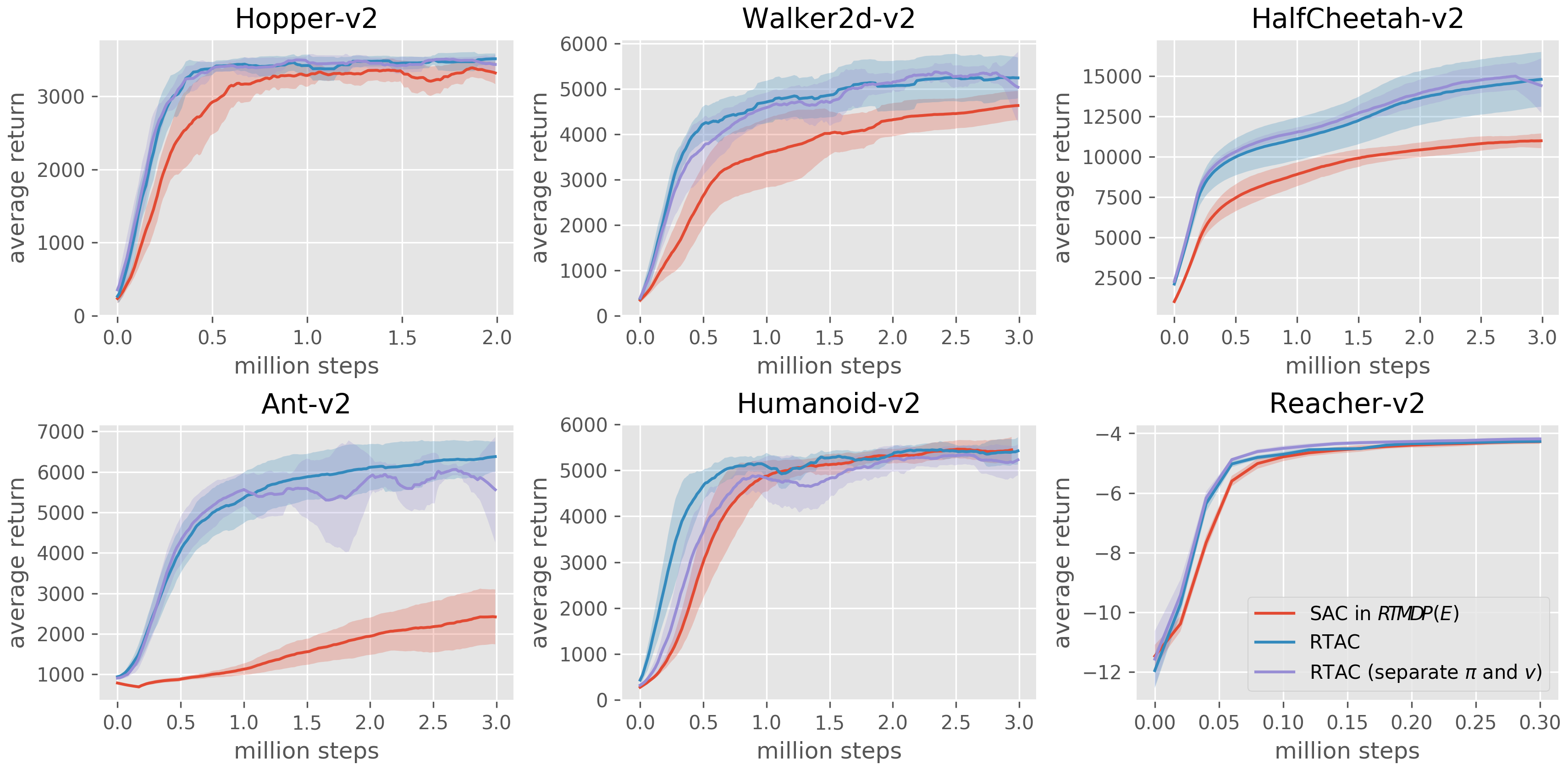}
  \caption{Comparison between RTAC and SAC in RTMDP versions of the benchmark environments.
  Mean and $95\%$ confidence interval are computed over eight training runs per environment.
    % For reference we also show the performance of SAC in the non-real time versions of the environments.
  }
  \label{RtacSacComp}
\end{figure}

\subsection{Autonomous driving task}
In addition to the MuJoCo environments, we have also tested RTAC and SAC on an autonomous driving task using the Avenue simulator \citep{ibrahim2019avenue}.
Avenue is a game-engine-based simulator where the agent controls a car. In the task shown here, the agent has to stay on the road and possibly steer around pedestrians. The observations are single image (256x64 grayscale pixels) and the car's velocity. The actions are continuous and two dimensional, representing steering angle and gas-brake. The agent is rewarded proportionally to the car's velocity in the direction of the road and negatively rewarded when making contact with a pedestrian or another car. In addition, episodes are terminated when leaving the road or colliding with any objects or pedestrians.

\begin{figure}[H]
    \centering
    \includegraphics[width=\linewidth]{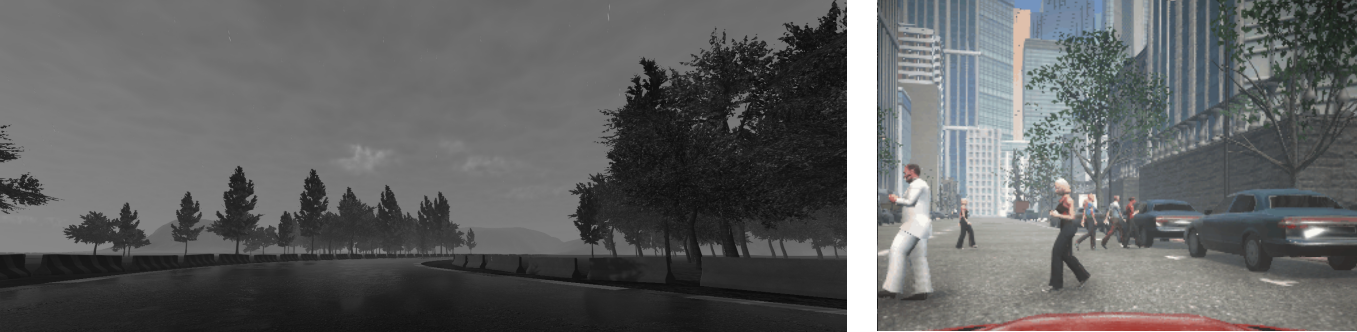}
    \caption{Left: Agent's view in \texttt{RaceSolo}. Right: Passenger view in \texttt{CityPedestrians}.}
    \label{fig:avcollage}
\end{figure}

\begin{figure}[H]
  \includegraphics[width=\linewidth]{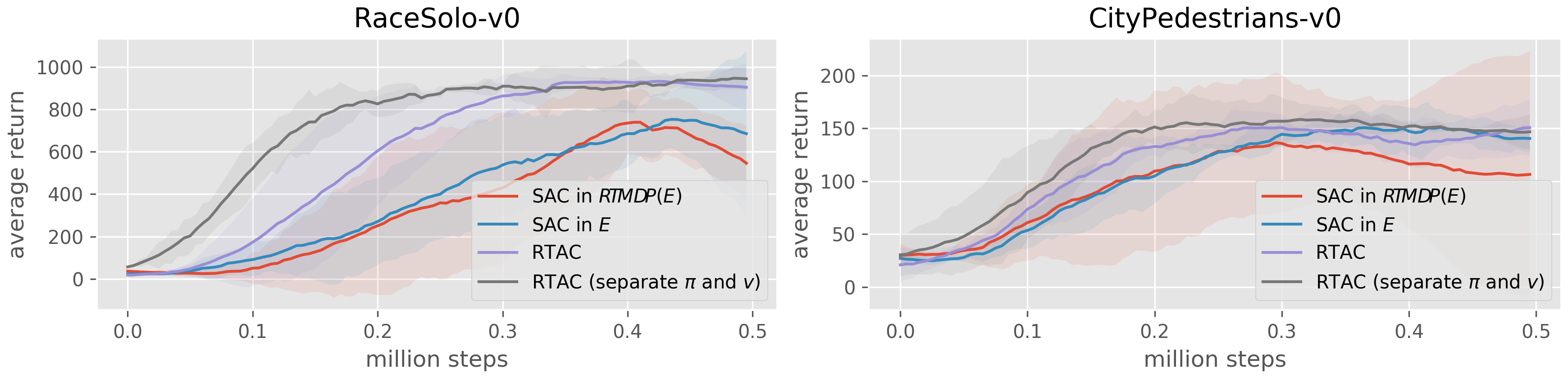}
  \caption{Comparison between RTAC and SAC in RTMDP versions of the autonomous driving tasks. We can see that RTAC under real-time constraints outperforms SAC even without real-time constraints. 
  Mean and $95\%$ confidence interval are computed over four training runs per environment.
  \label{RtacSacAv}}
\end{figure}

The hyperparameters used for the autonomous driving task are largely the same as for the MuJoCo tasks, however we used a lower entropy reward scale ($0.05$) and lower learning rate ($0.0002$). We used convolutional neural networks with four layers of convolutions with filter sizes $(8, 4, 4, 4)$, strides $(2, 2, 2, 1)$ and $(64, 64, 128, 128)$ channels. The convolutional layers are followed by two fully connected layers with $512$ units each.

\section{Related work}

\citet{travnik} noticed that the traditional MDP framework is ill suited for real-time problems. Other than our paper, however, no rigorous framework is proposed as an alternative, nor is any theoretical analysis provided.

\cite{Firoiu2018AtHS} applies a multi-step action delay to level the playing field between humans and artificial agents on the ALE (Atari) benchmark
\TODO{cite}
However, it does not address the problems arising from the turn-based MDP framework or recognizes the significance and consequences of the one-step action delay.

Similar to RTAC, NAF \citep{gu2016continuous} is able to do continuous control with a single neural network. However, it is requiring the action-value function to be quadratic in the action (and thus possible to optimize in closed form). This assumption is quite restrictive and could not outperform more general methods such as DDPG.

In SVG(1) \citep{heess2015learning} a differentiable transition model is used to compute the path-wise derivative of the value function one timestep after the action selection. This is similar to what RTAC is doing when using the value function to compute the policy gradient. However, in RTAC, we use the actual differentiable dynamics of the RTMDP, i.e. "passing through" the action to the next state, and therefore we do not need to approximate the transition dynamics. At the same time, transitions for the underlying environment are not modelled at all and instead sampled which is only possible because the actions $\augm a_t$ in a RTMDP only start to influence the underlying environment at the next timestep.

\section{Discussion}
We have introduced a new framework for Reinforcement Learning, RTRL, in which agent and environment step in unison to create a sequence of state-action pairs. We connected RTRL to the conventional Reinforcement Learning framework through the RTMDP and investigated its effects in theory and practice. We predicted and confirmed experimentally that conventional off-policy algorithms would perform worse in real-time environments and then proposed a new actor-critic algorithm, RTAC, that not only avoids the problems of conventional off-policy methods with real-time interaction but also allows us to merge actor and critic which comes with an additional gain in performance. We showed that RTAC outperforms SAC on both a standard, low dimensional continuous control benchmark, as well as a high dimensional autonomous driving task.

\newpage

\subsubsection*{Acknowledgments}
We would like to thank Cyril Ibrahim for building and helping us with the Avenue simulator; Craig Quiter and Sherjil Ozair for insightful discussions about agent-environment interaction; Alex Piché,
Scott Fujimoto,
Bhairav Metha and
Jhelum Chakravorty,
for reading drafts of this paper and finally
Jose Gallego, Olexa Bilaniuk and many others at Mila that helped us on countless occasions online and offline.

This work was completed during a part-time internship at Element AI and was supported by the Open Philanthropy Project.

% \TODO{Replace arxiv references with conference proceedings where possible.}
\bibliography{main}

\begin{thebibliography}{25}
\providecommand{\natexlab}[1]{#1}
\providecommand{\url}[1]{\texttt{#1}}
\expandafter\ifx\csname urlstyle\endcsname\relax
  \providecommand{\doi}[1]{doi: #1}\else
  \providecommand{\doi}{doi: \begingroup \urlstyle{rm}\Url}\fi

\bibitem[Andrychowicz et~al.(2017)Andrychowicz, Wolski, Ray, Schneider, Fong,
  Welinder, McGrew, Tobin, Abbeel, and Zaremba]{andrychowicz2017hindsight}
Andrychowicz, Marcin, Wolski, Filip, Ray, Alex, Schneider, Jonas, Fong, Rachel,
  Welinder, Peter, McGrew, Bob, Tobin, Josh, Abbeel, OpenAI~Pieter, and
  Zaremba, Wojciech.
\newblock Hindsight experience replay.
\newblock In \emph{Advances in Neural Information Processing Systems}, pp.\
  5048--5058, 2017.

\bibitem[Bellman(1957)]{bellman1957markovian}
Bellman, Richard.
\newblock A markovian decision process.
\newblock \emph{Journal of mathematics and mechanics}, pp.\  679--684, 1957.

\bibitem[Brockman et~al.(2016)Brockman, Cheung, Pettersson, Schneider,
  Schulman, Tang, and Zaremba]{gym}
Brockman, Greg, Cheung, Vicki, Pettersson, Ludwig, Schneider, Jonas, Schulman,
  John, Tang, Jie, and Zaremba, Wojciech.
\newblock Openai gym, 2016.

\bibitem[Firoiu et~al.(2018)Firoiu, Ju, and Tenenbaum]{Firoiu2018AtHS}
Firoiu, Vlad, Ju, Tina, and Tenenbaum, Joshua~B.
\newblock At human speed: Deep reinforcement learning with action delay.
\newblock \emph{CoRR}, abs/1810.07286, 2018.

\bibitem[Fujimoto et~al.(2018)Fujimoto, van Hoof, and
  Meger]{fujimoto2018addressing}
Fujimoto, Scott, van Hoof, Herke, and Meger, David.
\newblock Addressing function approximation error in actor-critic methods.
\newblock \emph{arXiv preprint arXiv:1802.09477}, 2018.

\bibitem[Gu et~al.(2016)Gu, Lillicrap, Sutskever, and Levine]{gu2016continuous}
Gu, Shixiang, Lillicrap, Timothy, Sutskever, Ilya, and Levine, Sergey.
\newblock Continuous deep q-learning with model-based acceleration.
\newblock In \emph{International Conference on Machine Learning}, pp.\
  2829--2838, 2016.

\bibitem[Haarnoja et~al.(2018{\natexlab{a}})Haarnoja, Zhou, Abbeel, and
  Levine]{haarnoja2018soft}
Haarnoja, Tuomas, Zhou, Aurick, Abbeel, Pieter, and Levine, Sergey.
\newblock Soft actor-critic: Off-policy maximum entropy deep reinforcement
  learning with a stochastic actor.
\newblock \emph{arXiv preprint arXiv:1801.01290}, 2018{\natexlab{a}}.

\bibitem[Haarnoja et~al.(2018{\natexlab{b}})Haarnoja, Zhou, Hartikainen,
  Tucker, Ha, Tan, Kumar, Zhu, Gupta, Abbeel, et~al.]{haarnoja2018soft2}
Haarnoja, Tuomas, Zhou, Aurick, Hartikainen, Kristian, Tucker, George, Ha,
  Sehoon, Tan, Jie, Kumar, Vikash, Zhu, Henry, Gupta, Abhishek, Abbeel, Pieter,
  et~al.
\newblock Soft actor-critic algorithms and applications.
\newblock \emph{arXiv preprint arXiv:1812.05905}, 2018{\natexlab{b}}.

\bibitem[Heess et~al.(2015)Heess, Wayne, Silver, Lillicrap, Erez, and
  Tassa]{heess2015learning}
Heess, Nicolas, Wayne, Gregory, Silver, David, Lillicrap, Tim, Erez, Tom, and
  Tassa, Yuval.
\newblock Learning continuous control policies by stochastic value gradients.
\newblock In \emph{Advances in Neural Information Processing Systems}, pp.\
  2944--2952, 2015.

\bibitem[Hwangbo et~al.(2017)Hwangbo, Sa, Siegwart, and
  Hutter]{hwangbo2017control}
Hwangbo, Jemin, Sa, Inkyu, Siegwart, Roland, and Hutter, Marco.
\newblock Control of a quadrotor with reinforcement learning.
\newblock \emph{IEEE Robotics and Automation Letters}, 2\penalty0 (4):\penalty0
  2096--2103, 2017.

\bibitem[Hwangbo et~al.(2019)Hwangbo, Lee, Dosovitskiy, Bellicoso, Tsounis,
  Koltun, and Hutter]{hwangbo2019learning}
Hwangbo, Jemin, Lee, Joonho, Dosovitskiy, Alexey, Bellicoso, Dario, Tsounis,
  Vassilios, Koltun, Vladlen, and Hutter, Marco.
\newblock Learning agile and dynamic motor skills for legged robots.
\newblock \emph{Science Robotics}, 4\penalty0 (26):\penalty0 eaau5872, 2019.

\bibitem[Ibrahim et~al.(2019)Ibrahim, Ramstedt, and Pal]{ibrahim2019avenue}
Ibrahim, Cyril, Ramstedt, Simon, and Pal, Christopher.
\newblock Avenue.
\newblock \url{https://github.com/elementai/avenue}, 2019.

\bibitem[Kingma \& Ba(2014)Kingma and Ba]{kingma2014adam}
Kingma, Diederik~P and Ba, Jimmy.
\newblock Adam: A method for stochastic optimization.
\newblock \emph{arXiv preprint arXiv:1412.6980}, 2014.

\bibitem[Konda \& Tsitsiklis(2000)Konda and Tsitsiklis]{konda2000actor}
Konda, Vijay~R and Tsitsiklis, John~N.
\newblock Actor-critic algorithms.
\newblock In \emph{Advances in neural information processing systems}, pp.\
  1008--1014, 2000.

\bibitem[Lillicrap et~al.(2015)Lillicrap, Hunt, Pritzel, Heess, Erez, Tassa,
  Silver, and Wierstra]{lillicrap2015continuous}
Lillicrap, Timothy~P, Hunt, Jonathan~J, Pritzel, Alexander, Heess, Nicolas,
  Erez, Tom, Tassa, Yuval, Silver, David, and Wierstra, Daan.
\newblock Continuous control with deep reinforcement learning.
\newblock \emph{arXiv preprint arXiv:1509.02971}, 2015.

\bibitem[Mnih et~al.(2015)Mnih, Kavukcuoglu, Silver, Rusu, Veness, Bellemare,
  Graves, Riedmiller, Fidjeland, Ostrovski, et~al.]{mnih2015human}
Mnih, Volodymyr, Kavukcuoglu, Koray, Silver, David, Rusu, Andrei~A, Veness,
  Joel, Bellemare, Marc~G, Graves, Alex, Riedmiller, Martin, Fidjeland,
  Andreas~K, Ostrovski, Georg, et~al.
\newblock Human-level control through deep reinforcement learning.
\newblock \emph{Nature}, 518\penalty0 (7540):\penalty0 529, 2015.

\bibitem[Schulman et~al.(2015)Schulman, Levine, Abbeel, Jordan, and
  Moritz]{schulman2015trust}
Schulman, John, Levine, Sergey, Abbeel, Pieter, Jordan, Michael, and Moritz,
  Philipp.
\newblock Trust region policy optimization.
\newblock In \emph{International Conference on Machine Learning}, pp.\
  1889--1897, 2015.

\bibitem[Silver et~al.(2014)Silver, Lever, Heess, Degris, Wierstra, and
  Riedmiller]{silver2014deterministic}
Silver, David, Lever, Guy, Heess, Nicolas, Degris, Thomas, Wierstra, Daan, and
  Riedmiller, Martin.
\newblock Deterministic policy gradient algorithms.
\newblock In \emph{ICML}, 2014.

\bibitem[Silver et~al.(2017)Silver, Schrittwieser, Simonyan, Antonoglou, Huang,
  Guez, Hubert, Baker, Lai, Bolton, et~al.]{silver2017mastering}
Silver, David, Schrittwieser, Julian, Simonyan, Karen, Antonoglou, Ioannis,
  Huang, Aja, Guez, Arthur, Hubert, Thomas, Baker, Lucas, Lai, Matthew, Bolton,
  Adrian, et~al.
\newblock Mastering the game of go without human knowledge.
\newblock \emph{Nature}, 550\penalty0 (7676):\penalty0 354, 2017.

\bibitem[Tallec et~al.(2019)Tallec, Blier, and Ollivier]{tallec2019making}
Tallec, Corentin, Blier, L{\'e}onard, and Ollivier, Yann.
\newblock Making deep q-learning methods robust to time discretization.
\newblock \emph{arXiv preprint arXiv:1901.09732}, 2019.

\bibitem[Tesauro(1994)]{tesauro1994td}
Tesauro, Gerald.
\newblock Td-gammon, a self-teaching backgammon program, achieves master-level
  play.
\newblock \emph{Neural computation}, 6\penalty0 (2):\penalty0 215--219, 1994.

\bibitem[Todorov et~al.(2012)Todorov, Erez, and Tassa]{todorovET12}
Todorov, Emanuel, Erez, Tom, and Tassa, Yuval.
\newblock Mujoco: A physics engine for model-based control.
\newblock In \emph{IROS}, pp.\  5026--5033. IEEE, 2012.
\newblock ISBN 978-1-4673-1737-5.
\newblock URL
  \url{http://dblp.uni-trier.de/db/conf/iros/iros2012.html#TodorovET12}.

\bibitem[Travnik et~al.(2018)Travnik, Mathewson, Sutton, and Pilarski]{travnik}
Travnik, Jaden~B., Mathewson, Kory~W., Sutton, Richard~S., and Pilarski,
  Patrick~M.
\newblock Reactive reinforcement learning in asynchronous environments.
\newblock \emph{Frontiers in Robotics and AI}, 5:\penalty0 79, 2018.
\newblock ISSN 2296-9144.
\newblock \doi{10.3389/frobt.2018.00079}.
\newblock URL
  \url{https://www.frontiersin.org/article/10.3389/frobt.2018.00079}.

\bibitem[van Hasselt et~al.(2016)van Hasselt, Guez, Hessel, Mnih, and
  Silver]{van2016learning}
van Hasselt, Hado~P, Guez, Arthur, Hessel, Matteo, Mnih, Volodymyr, and Silver,
  David.
\newblock Learning values across many orders of magnitude.
\newblock In \emph{Advances in Neural Information Processing Systems}, pp.\
  4287--4295, 2016.

\bibitem[Walsh et~al.(2008)Walsh, Nouri, Li, and Littman]{Walsh2008LearningAP}
Walsh, Thomas~J., Nouri, Ali, Li, Lihong, and Littman, Michael~L.
\newblock Learning and planning in environments with delayed feedback.
\newblock \emph{Autonomous Agents and Multi-Agent Systems}, 18:\penalty0
  83--105, 2008.

\end{thebibliography}
\bibliographystyle{icml2018.bst}

\newpage
\appendix

\section{Additional Experiments} \label{AdditionalExperiments}

\begin{figure}[H]
  \centering
  \includegraphics[trim=0.7cm 0 0 0, width=1.02\linewidth]{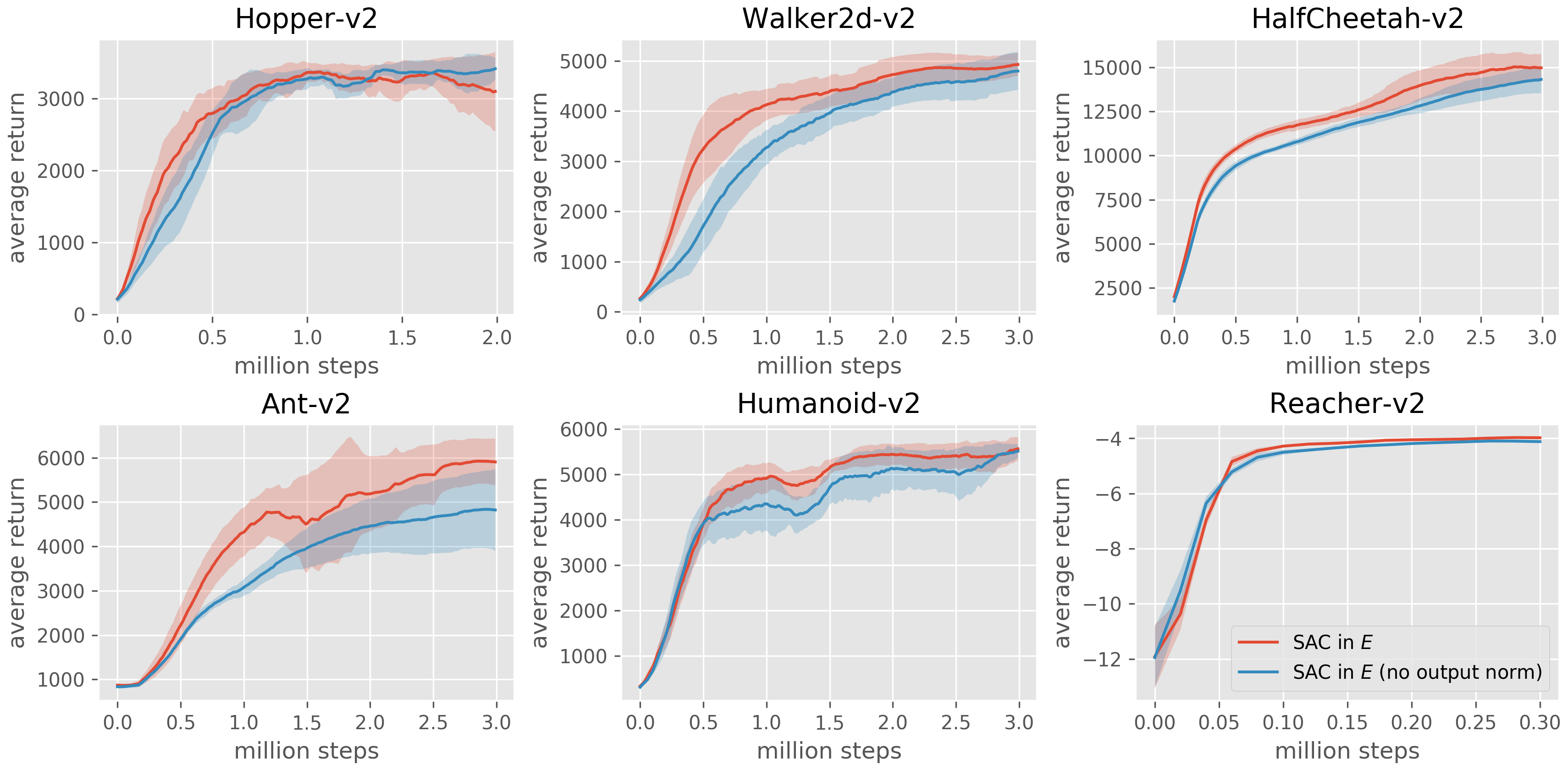}
  \caption{SAC with and without output normalization. SAC in $E$ (no output norm) corresponds to the canonical version presented in \cite{haarnoja2018soft}. Mean and $95\%$ confidence interval are computed over eight training runs per environment.}
  \label{SacNoNorm}
\end{figure}

\begin{figure}[H] 
  \centering
  \includegraphics[trim=0.7cm 0 0 0, width=1.02\linewidth]{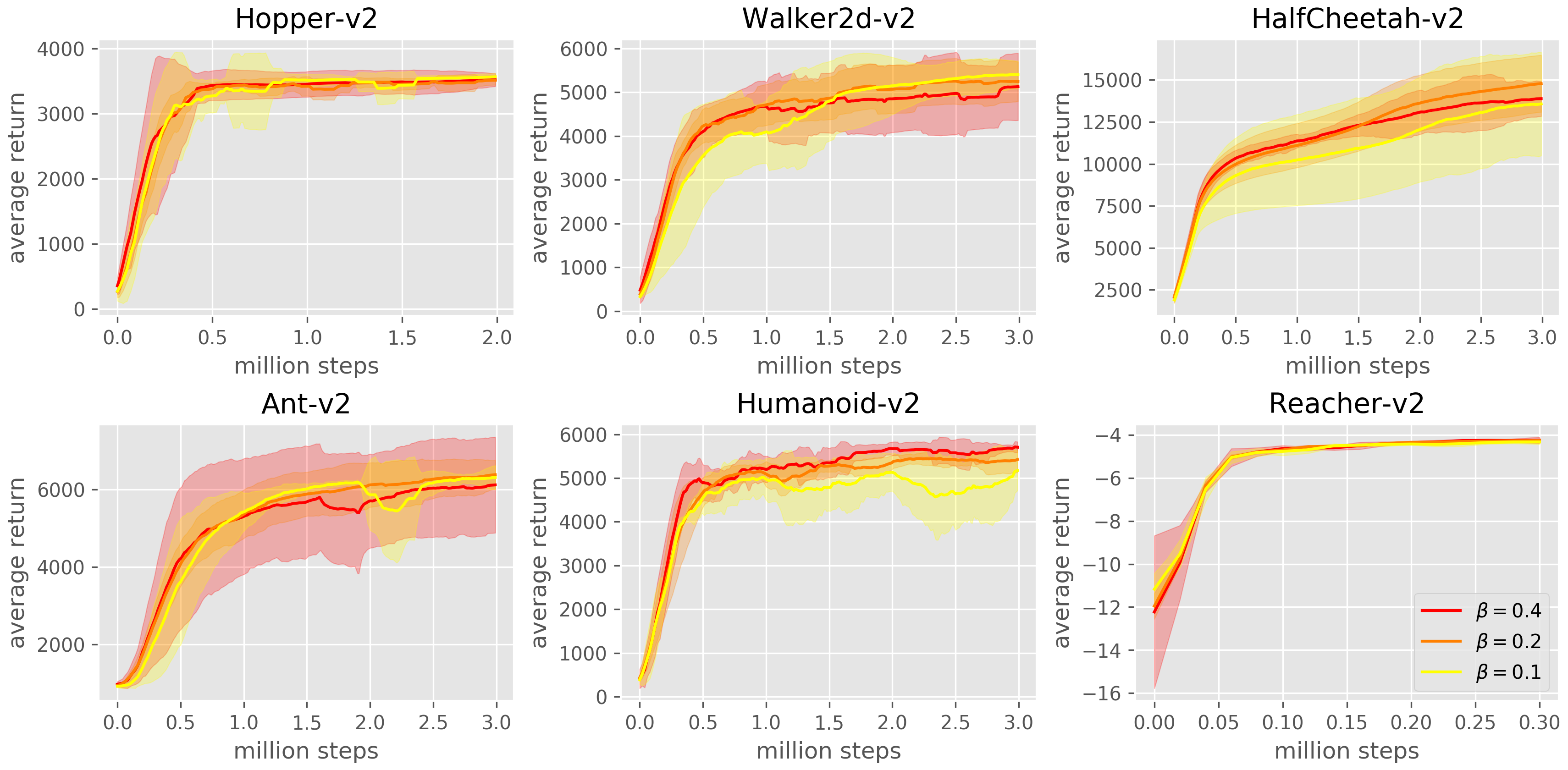}
  \caption{Comparison between different actor loss scales ($\beta$). Mean and $95\%$ confidence interval are computed over four training runs per environment.}
  \label{RtacBeta}
\end{figure}

\begin{figure}[H] 
  \centering
  \includegraphics[trim=0.7cm 0 0 0, width=1.02\linewidth]{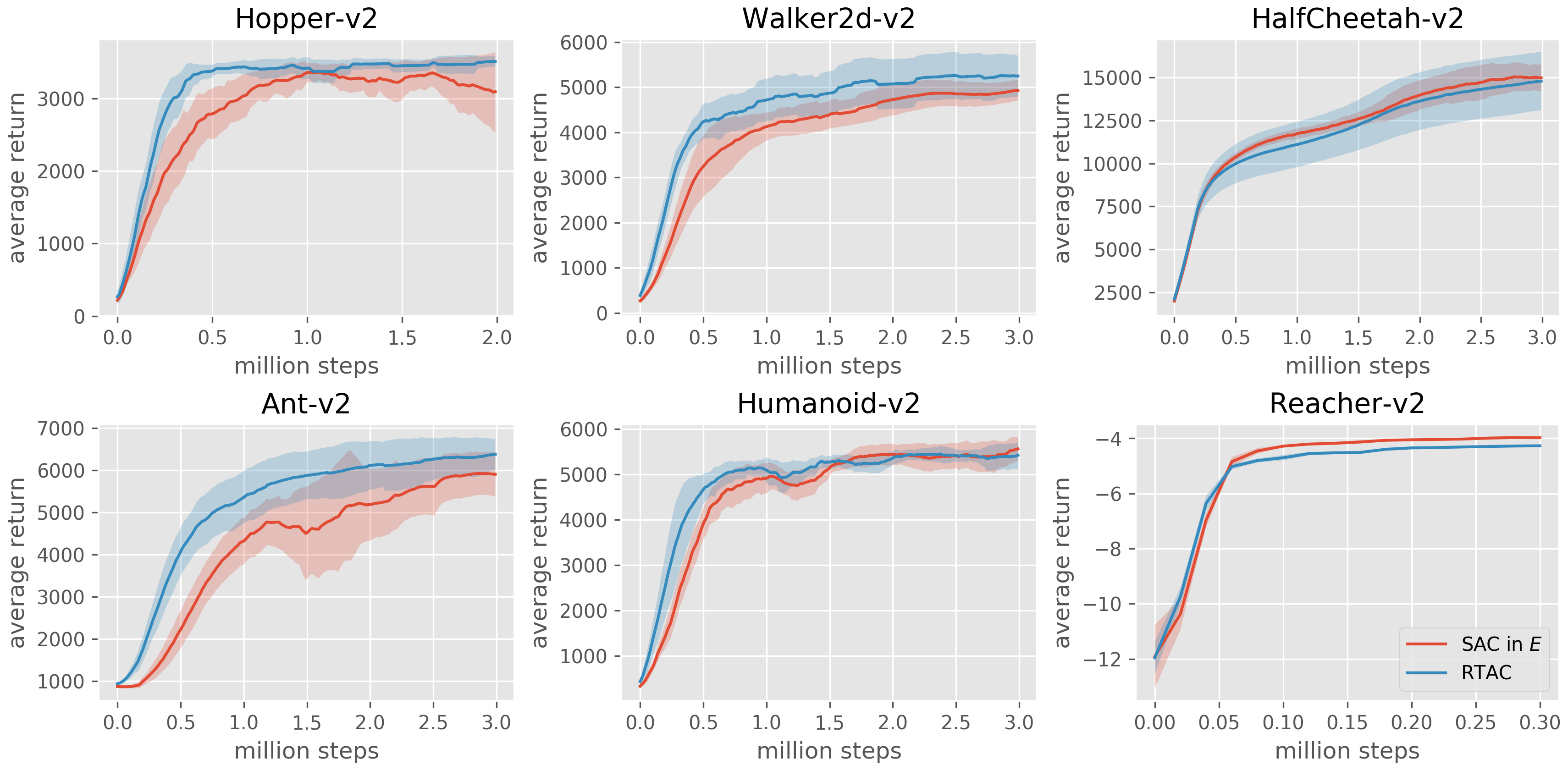}
  \caption{Comparison between RTAC (real-time) and SAC in $E$ (turn-based). Mean and $95\%$ confidence interval are computed over eight training runs per environment.}
  \label{RtacSacTb}
\end{figure}

\begin{figure}[H] 
  \centering
  \includegraphics[trim=0.7cm 0 0 0, width=1.02\linewidth]{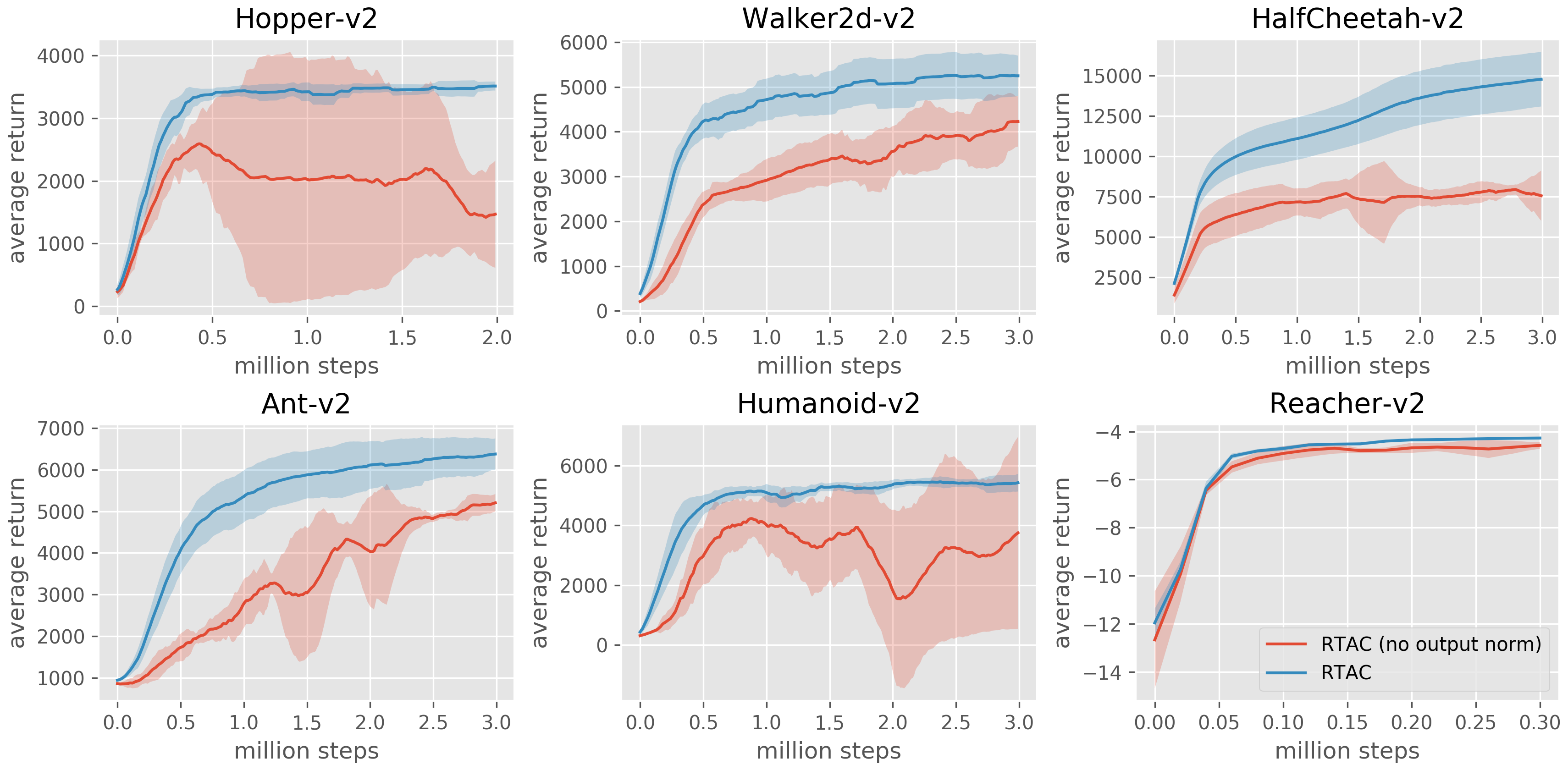}
  \caption{RTAC with and without output normalization. Mean and $95\%$ confidence interval are computed over eight and four training runs per environment, respectively.}
  \label{RtacNoNorm}
\end{figure}

\section{Hyperparameters}

\begin{table}[H]
  \caption{Hyperparameters}
  \label{Hyperparameters}
  \centering
  \begin{tabular}{lll}
    % \toprule
    % \multicolumn{2}{c}{Part}                   \\
    % \cmidrule(r){1-2}
    Name     & RTAC     & SAC \\
    \midrule
    optimizer & Adam & Adam \citep{kingma2014adam} \\
    learning rate & $0.0003$  & $0.0003$     \\
    discount ($\gamma$)     & $0.99$ & $0.99$      \\
    hidden layers     & $2$       & $2$  \\
    units per layer     & $256$       & $256$  \\
    samples per minibatch     & $256$       & $256$  \\
    target smoothing coefficient ($\tau$)     & $0.005$       & $0.005$  \\
    gradient steps / environment steps     & $1$       & $1$  \\
    reward scale & $5$ & $5$ \\
    entropy scale ($\alpha$) & $1$ & $1$ \\
    actor-critic loss factor ($\beta$)     & $0.2$       & -  \\
    Pop-Art alpha & $0.0003$ & - \\
    start training after & $10000$ & $10000$ steps \\
    \bottomrule
  \end{tabular}
\end{table}

\section{Proofs}
\label{proofs}

\TheoremRttb*
\begin{proof} For any environment $E=(S, A, \mu, p, r)$, we want to show that the two above MRPs are the same. Per Def.~\ref{def:TBMRP}~and~\ref{def:RTMDP} for $T\!B\!M\!R\!P(R\!T\!M\!D\!P(E), \augm \pi)$ we have
\begin{align*}
    &\text{(1) state space} && S \times A, \\
    &\text{(2) initial distribution} && \mu(s) \delta(a-c), \\
    &\text{(3) transition kernel} && \int_A p(s_{t+1} | s_t, a_t) \delta(a_{t+1} - \augm a) \ \augm\pi(\augm a | \itup{s_t, a_t}) \ d \augm a, \\
    &\text{(4) state-reward function}  && \int_A r(s, a) \ \augm\pi(\augm a | \itup{s_t, a_t}) \ d \augm a.
\end{align*}
The transition kernel, using the definition of the Dirac delta function $\delta$, can be simplified to
\begin{equation}
p(s_{t+1} | s_t, a_t) \int_A \delta(a_{t+1} - \augm a) \ \augm\pi(\augm a | \itup{s_t, a_t}) \ d \augm a = p(s_{t+1} | s_t, a_t) \ \augm \pi(a_{t+1} | \itup{s_t, a_t}).
\end{equation}
The state-reward function can be simplified to
\begin{equation}
r(s_t, a_t) \int_A \pi(\augm a | \x) \ d \augm a = r(s_t, a_t).
\end{equation}
It should now be easy to see how the elements above match $R\!T\!M\!R\!P(E, \augm \pi)$, Def.~\ref{def:RTMRP}.
\end{proof}

% \vspace{\baselineskip}
\pagebreak

\TheoremTbrt*
\begin{proof}
Given MDP $E = (S, A, \mu, p, r)$, we have {$\Psi = (Z, \nu, \sigma, \bar \rho) =  R\!T\!M\!R\!P(T\!B\!M\!D\!P(E), \augm \pi)$} with
\begin{align}
    &\text{(1) state space} && Z = S \times \{0, 1\} \times A, \\
    &\text{(2) initial distribution} && \nu(\itup{s, b, a}) = \mu(s) \ \delta(b) \ \delta(a-c), \\
    &\text{(3) transition kernel} && \sigma(\itup{s_{t+1}, b_{t+1}, a_{t+1}} | \itup{s_t, b_t, a_t}) \\
    & && \!\!\!\!\!\!\!\! = \begin{cases}
            \delta(s_{t+1}-s_t) \ \delta(b_{t+1}-1) \ \pi(a_{t+1}| s_t) & \text{if} \ b_t = 0 \\
            p(s_{t+1} | s_t, a_t) \ \ \ \delta(b_{t+1}) \quad \pi(a_{t+1} | s_t) & \text{if} \ b_t = 1
        \end{cases}, \\
    &\text{(4) state-reward function} && \bar\rho(\itup{s, b, a}) = r(s, a) \ b.
\end{align}
We can construct $\augm \Omega = (Z, \nu, \augm\kappa, \augm{\bar r})$, a sub-MRP with interval $n=2$. Since we always skip the step in which $b=1$, we only have to define the transition kernel for $b_t=0$, i.e.
\begin{align}
    \augm\kappa(z_{t+1} | z_t)
    & = \sigma^2 (\itup{s_{t+1}, b_{t+1}, a_{t+1}} | \itup{s_t, b_t, a_t}) \\
  % = & \int_Z \sigma(z_{t+1}|z') \sigma(z'|z_t) dz' \\
    & = \int_{S \times A} \sigma(\itup{s_{t+1}, b_{t+1}, a_{t+1}} | \itup{s', 1, a'}) \ \sigma(\itup{s', 1, a'} | \itup{s_t, 0, a_t}) \ d({s', a'}) \\
    & = \int_{S \times A} p(s_{t+1} | s', a') \ \delta(b_{t+1}) \ \pi(a_{t+1} | s') \ \delta(s'-s_t) \ \pi(a'| s_t) \ d(s', a') \\
    & = \int_A p(s_{t+1} | s_t, a') \ \delta(b_{t+1}) \ \pi(a'| s_t) \ d a'.
\end{align}
For the state-reward function we have (again only considering $b=0$)
\begin{align}
    \augm{\bar r}(\itup{s, b, a})
    & = v_\Psi^2(\itup{s, b, a}) \\
    & = \underbrace{\bar\rho(\itup{s, 0, a})}_{=0} \ + \int_{S \times A} \bar\rho(\itup{s', 1, a'}) \ \sigma(\itup{s', 1, a'}|\itup{s, 0, a}) \ d(s', a') \\
    & = \int_{S \times A}  r(s', a') \ \delta(s'-s) \ \pi(a'| s) \ d(s', a') \\
    & = \int_A r(s, a') \ \pi(a'|s) \ d a'.
\end{align}
The sub-MRP $\augm \Omega$ is already very similar to $T\!B\!M\!R\!P(E, \pi)$ except for having a larger state-space. To get rid of the $b$ and $a$ state components, we reduce $\augm \Omega$ with a state transformation $f(s, b, a) = s$. The reduced MRP has
\begin{align}
    & \text{(1) state space} && \{f(z) : z \in Z\} = S, \\
    & \text{(2) initial distribution} && \int_\dom{f^{-1}(s)} \nu(z) dz
        = \int_\dom{\{s\} \times \{0, 1\} \times A} \mu(s) \delta(b) \delta(a-c) \ d(s, b, a)
        = \mu(s), & \\
    & \text{(3) transition kernel} && \int_\dom{f^{-1}(s_{t+1})} \augm\kappa(z' | z) \ d z'
        \ \text{for almost all} \ z \in f^{-1}(s_t) \\
    & && \!\!\!\!\!\!\!\! = \int_\dom{\{s_{t+1}\} \times \{0, 1\} \times A} \augm\kappa(z' | z) \ d z'
        \ \text{for almost all} \ z \in \{s_t\} \times \{0, 1\} \times A \\
    & && \!\!\!\!\!\!\!\! = \int_A p(s_{t+1} | s_t, a') \ \pi(a'| s_t) \ d a' \\
    & \text{(4) state-reward function} && \augm{\bar r}(z)
        \ \text{for almost all} \ z \in f^{-1}(s). \\
    & && \!\!\!\!\!\!\!\! = \int_A r(s, a') \ \pi(a'|s) \ d a',
\end{align}
which is exactly $T\!B\!M\!R\!P(E, \pi)$.
\end{proof}
\vspace{\baselineskip}

\LemmaActionValue*
\begin{proof}
After starting with the definition of the action-value function for an environment {$(\augm X, A, \augm \mu, \augm p, \augm r) = \text{\textit{RTMDP}}(E)$} with $E = (S, A, \mu, p, r)$, we separate the transition distribution $\augm p$ into its two constituents $p$ and $\delta$ and then, integrate over the Dirac delta.
{\medmuskip=0mu \thinmuskip=0mu \thickmuskip=0mu
\begin{align}
    & q_{\text{\textit{RTMDP}}(E)}^{\augm \pi}(\augm x_t, \augm a_t)
    \ = \ q_{\text{\textit{RTMDP}}(E)}^{\augm \pi}(\itup{s_t, a_t}, \augm a_t) \\
    & \ = \ \augm r(\itup{s_t, a_t}, \augm a_t) + \E_{{\small \itup{s_{t+1}, a_{t+1}}} \sim \augm p(\cdot | {\small \itup{s_t, a_t}}, \augm a_t)}[ \underbrace{\E_{\augm a_{t+1} \sim \augm \pi(\cdot | {\small \itup{s_{t+1}, a_{t+1}}})}[ q_{\text{\textit{RTMDP}}(E)}^{\augm \pi}(\itup{s_{t+1}, a_{t+1}}, \augm a_{t+1})]}] \\
    & \ = \ r(s_t, a_t)  \quad +  \quad \int_S p(s_{t+1} | s_t, a_t) \quad \int_A \delta(a_{t+1}-\augm a_t) \quad \quad \quad \quad \quad \ \dots \quad \quad d a_{t+1} \ d s_{t+1} \\
    & \ = \ r(s_t, a_t) +  \int_S \!\! p(s_{t+1} | s_t, a_t) \ \ \E_{\augm a_{t+1} \sim \augm \pi(\cdot | {\small \itup{s_{t+1}, \augm a_{t}}})}[ q_{\text{\textit{RTMDP}}(E)}^{\augm \pi}(\itup{s_{t+1}, \augm a_{t}}, \augm a_{t+1})] \ \ d s_{t+1}
\end{align}}
\end{proof}

\LemmaStateValue*
\begin{proof}
We follow the same procedure as for Lemma~\ref{lemma:statevalue}.
{\medmuskip=0mu \thinmuskip=0mu \thickmuskip=0mu
\begin{align}
    & v_{\text{\textit{RTMDP}}(E)}^{\augm \pi}(\augm x_t)
    \ = \ v_{\text{\textit{RTMDP}}(E)}^{\augm \pi}(\itup{s_t, a_t}) \\
    & \ = \ \E_{\augm a_{t} \sim \augm \pi(\cdot | {\small \itup{s_{t}, a_{t}}})}[ \augm r(\itup{s_t, a_t}, \augm a_t) + \E_{{\small \itup{s_{t+1}, a_{t+1}}} \sim \augm p(\cdot | {\small \itup{s_t, a_t}}, \augm a_t)}[ v_{\text{\textit{RTMDP}}(E)}^{\augm \pi}(\itup{s_{t+1}, a_{t+1}})]] \\
    & \ = \ r(s_t, a_t) + \E_{\augm a_{t} \sim \augm \pi(\cdot | {\small \itup{s_{t}, a_{t}}})}[ \int_S p(s_{t+1} | s_t, a_t) \int_A \delta(a_{t+1}-\augm a_t) \ v_{\text{\textit{RTMDP}}(E)}^{\augm \pi}(\itup{s_{t+1}, a_{t+1}}) \ d a_{t+1} \ d s_{t+1}] \\
    & \ = \ r(s_t, a_t) +  \int_S \!\! p(s_{t+1} | s_t, a_t) \ \ \E_{\augm a_{t} \sim \augm \pi(\cdot | {\small \itup{s_{t}, a_{t}}})}[ v_{\text{\textit{RTMDP}}(E)}^{\augm \pi}(\itup{s_{t+1}, \augm a_{t}})] \ \ d s_{t+1}
\end{align}}
\end{proof}

\PropRtac*
\label{AppPropRtac}
\begin{proof} As shown in \citet{haarnoja2018soft}, Equation \ref{ActorLoss} can be reparameterized to obtain the policy gradient, which, applied in a RTMDP, yields
\begin{equation}\label{SacGrad}
    \nabla_{\augm \pi} L^\text{SAC}_{R\!T\!M\!D\!P(E), \augm \pi}
    = \E_{\augm x_t, \epsilon}[ \nabla_{\augm \pi} (\log {\augm \pi}(\augm h_{\augm \pi}(\augm x_t, \epsilon), \augm x_t) - \tfrac 1 {\alpha} \nabla_{\augm \pi} q(\augm x_t, \augm h_{\augm \pi}(\augm x_t, \epsilon)) ]
\end{equation}
and reparameterizing Equation \ref{RtacActorLoss} yields
\begin{equation}
    \nabla_{\augm \pi} L^\text{RTAC}_{R\!T\!M\!D\!P(E), \augm \pi} 
    = \E_{\augm x_t, \epsilon}[ \nabla_{\augm \pi} (\log {\augm \pi}(\augm h_{\augm \pi}(\augm x_t, \epsilon), \augm x_t) - \tfrac 1 {\alpha}\gamma \nabla_{\augm \pi} \E_{s_{t+1} \sim p(\cdot | \x_t)}[\augm v(s_{t+1}, \augm h_{\augm \pi}(\augm x_t, \epsilon))]]
\end{equation}
where $\augm h_{\augm \pi}$ is a function mapping from state and noise to an action distributed according to $\augm \pi$. This leaves us to show that
\begin{equation}
\nabla_{\augm a_t} q(\x_t, \augm a_t)
= \underbrace{\nabla_{\augm a_t} \augm r(\x_t, {\color{lightgray} \augm a_t})}_{= 0} + \nabla_{\augm a_t} \gamma \E_{\x_{t+1} \sim \augm p(\cdot | \x_t, \augm a_t)} [ \augm v(\x_{t+1}) ]
= \gamma \nabla_{\augm a_t}\E_{s_{t+1} \sim p(\cdot | \x_t)} [ \augm v(s_{t+1}, \augm a_t) ]
\end{equation}
which follows from the definition of the soft action-value function and simplifying quantities defined in the RTMDP.
% \algn{
% \nabla_{\augm a_t} q(\x_t, \augm a_t) = & \underbrace{\nabla_{\augm a_t} \augm r(\x_t, {\color{lightgray} \augm a_t})}_{= 0} + \nabla_{\augm a_t} \gamma \E_{\x_{t+1} \sim \augm p(\x_t, \augm a_t)} [ \augm v(\x_{t+1}) ] \\
% = & \nabla_{\augm a_t} \gamma \E_{s_{t+1} \sim p(\x_t)} [ \augm v(s_{t+1}, \augm a_t) ] \\
% % = & \nabla_{\augm a_t} \E_{s_{t+1} \sim p(\x_t)}[\augm v(s_{t+1}, \augm a_t)] \\
% }
\end{proof}

\section{Definitions}

\begin{definition} \label{def:TBMDP}
A Turn-Based Markov Decision Process $(Z, A, \nu, q, \rho) = T\!B\!M\!D\!P(E)$ augments another Markov Decision Process $E = (S, A, \mu, p, r)$, such that
\begin{flalign*}
    &\text{(1) state space} && Z = S \times \{0, 1\}, \\
    &\text{(2) action space} && A, \\
    &\text{(3) initial state distribution} && \nu(\itup{s_0, b_0}) = \mu(s_0) \ \delta(b_0), \\
    &\text{(4) transition distribution} && q(\itup{s_{t+1}, b_{t+1}} | \itup{s_t, b_t}, a_t) = \begin{cases}
            \delta(s_{t+1}-s_t) \ \delta(b_{t+1}-1) & \text{if} \ b_t = 0 \\
            p(s_{t+1} | s_t, a_t) \ \delta(b_{t+1}) & \text{if} \ b_t = 1
        \end{cases} \\
    &\text{(5) reward function} && \rho(\itup{s, b}, a) = r(s, a) \ b.
\end{flalign*}
\end{definition}
\vspace{\baselineskip}

\begin{definition} \label{def:sub-MRP}
$\augm \Omega = (Z, \nu, \augm \kappa, \augm {\bar r})$ is a sub-MRP of $\Psi = (Z, \nu, \sigma, \bar \rho)$ if its states are sub-sampled with interval $n \in \mathbb N$ and rewards are summed over each interval, i.e. for almost all $z$
\begin{equation}
    \augm\kappa(z'|z) = \kappa^n(z'|z) \quad \text{and} \quad \augm{\bar r}(z) = v_{\Psi}^n(z).
\end{equation}
% where $\kappa^n$ is the $n$-step transition function and $v_{\Omega}^n$ is the n-step value function.
\end{definition}
\vspace{\baselineskip}

\begin{definition} \label{def:MRP-reduction}
A MRP $\Omega = (S, \mu, \kappa, \bar r)$ is a reduction of $\augm \Omega = (Z, \nu, \augm\kappa, \augm{\bar r})$ if there is a state transformation $f: \augm Z \to S$ that neither affects the evolution of states nor the rewards, i.e.
% \begin{equation}
%     \mu(s) = \int_{f^{-1}(s)} \hspace{-.7cm} \nu(z) dz
%     \quad \text{and} \quad
%     \kappa(s | f(z)) = \int_{f^{-1}(s)} \hspace{-.7cm} \augm\kappa(z' | z) \ d z' \\
%     \quad \text{and} \quad
%     \bar r(f(z)) = \augm{\bar r}(z).
%     \footnote{Here, $f^{-1}(s)$ is the set of inputs for which $f(z) = s$, i.e. $\{z \in Z : f(z)=s\}$.} \\
% \end{equation}
\begin{flalign}
    & \text{(1) state space} && S = \{f(z) : z \in Z\}, & \\
    & \text{(2) initial distribution} && \mu(s) = \int_{f^{-1}(s)} \hspace{-.7cm} \nu(z) dz, & \\
    & \text{(3) transition kernel} && \kappa(s_{t+1} | s) = \int_{f^{-1}(s_{t+1})} \hspace{-.7cm} \augm\kappa(z' | z) \ d z'
        \ \ \text{for almost all} \ z \in f^{-1}(s), & \\
    & \text{(4) state-reward function} && r(s) = \augm{\bar r}(z)
        \ \ \text{for almost all} \ z \in f^{-1}(s). & 
\end{flalign}
% where $q_\Omega^t(s)$ is the state probability at time $t$.
\end{definition}
\vspace{\baselineskip}

\begin{definition} \label{def:MRP-contains}
A MRP $\Psi$ contains another MRP $\Omega$ (we write $\Omega \propto \Psi$) if $\Psi$ works at a higher frequency and has a richer state than $\Psi$ but behaves otherwise identically. More precisely,
\begin{equation}
   \Omega \propto \Psi  \iff \text{$\Omega$ is a reduction (Def.~\ref{def:MRP-reduction}) of a sub-MRP (Def.~\ref{def:sub-MRP}) of $\Psi$.}
\end{equation}
\end{definition}
\vspace{\baselineskip}

\begin{definition}
The $n$-step transition function of a MRP $\Omega = (S, \mu, \kappa, \bar r)$ is
\begin{equation}
    \kappa^n(s_{t+n} | s_t) = \int_S \kappa(s_{t+n} |s_{t+n-1}) \kappa^{n-1}(s_{t+n-1} | s_t) \ d s_{t+n-1}. \quad \text{\big | with $\kappa^1 = \kappa$}
\end{equation}
\end{definition}
\vspace{\baselineskip}

\begin{definition}
The n-step value function $v_{\Omega}^n$ of a MRP $\Omega = (S, \mu, \kappa, \bar r)$ is
\begin{equation}
    v_{\Omega}^n(s_t) = \bar r(s_t) + \int_S \kappa(s_{t+1}|s_t) v_{\Omega}^{n-1}(s_{t+1}) \ d s_{t+1}.
    \quad \text{\big | with $v_{\Omega}^1 = \bar r$}
\end{equation}
\end{definition}

\end{document}